%% file: iclr2026_conference.tex
\documentclass{article} % For LaTeX2e
\usepackage{iclr2026_conference,times}

\input{math_commands.tex}

\usepackage{hyperref}
\usepackage{url}
\usepackage{subcaption}
\usepackage{makecell}
\title{Architecture-driven Shift: towards a lightweight selector for capturing the trends of logit shift}

\author{Zhong Ye \& Yu Hu\thanks{Corresponding author} \\
School of Computer Science and Technology\\
Guangdong University of Technology\\
Guangzhou, China \\
\texttt{yezhong@mails.gdut.edu.cn, huyu@gdut.edu.cn} \\
\And
Ruilin Tang \\
School of Computer Science and Engineering \\
South China University of Technology \\
Guangzhou, China \\
\texttt{t1344409@gmail.com} \\
}

\usepackage{booktabs}
\usepackage{multirow}
\usepackage{amsmath}
\usepackage{amssymb}
\usepackage{mathtools}
\usepackage{amsthm}
\theoremstyle{plain}
\newtheorem{theorem}{Theorem}[section]
\newtheorem{proposition}[theorem]{Proposition}
\newtheorem{lemma}[theorem]{Lemma}

\theoremstyle{definition}

\newtheorem{assumption}[theorem]{Assumption}
\theoremstyle{remark}
\newtheorem{remark}[theorem]{Remark}

\iclrfinalcopy % Uncomment for camera-ready version, but NOT for submission.
\begin{document}

\maketitle

\begin{abstract}
Continual Learning (CL) is a practical paradigm to utilize power of deep pre-trained neural networks, but which pre-trained model has a better ability to balance ``Plasticity-Stability", deserving to be chosen? The logit shift serves as a natural proxy because it represents the logit shift in CL scenarios. However, obtaining the logit shift requires huge computational cost, which hinders large-scale model selection. Existing theoretical analyses fail to offer an efficient alternative because of the assumption of uniform hidden layer widths, which ignores the structural heterogeneity (variable width and depth) of real-world architectures. This raises a critical question: \textbf{what theoretically relationship can be identified between heterogeneous architecture and logit shift on prior tasks (that the model has been trained on)?} To answer the question, we decouple logit shift into architecture dependency and data dependency to establish our framework, which reveals that \textbf{the combination of two dependency, defined as Architecture-driven Shift (ADS), that can capture the logit shift tendency well computable with few data samples}. Specifically, for a well-optimized model on prior tasks, higher ADS is associated with a larger logit shift after training on the current task, which derived based on three mechanistic components: (1) spectral norm scaling of weight matrix gradients with layer width, (2) the optimization path length of the new task, and (3) the asymptotic task conflict in wide networks. Extensive empirical results across more than 175 diverse architectures demonstrate a strong monotonic correlation (the weakest Spearman's $r_s=0.731$) between ADS and logit shift. Practically, we demonstrate that ADS can serve as a lightweight proxy of the expected calibration error, which is a widely used metric for reliable CL model selection, on three datasets across six scenarios.

% \textblue{Furthermore, we demonstrate that ADS can serve as a lightweight proxy for Expected Calibration Error (ECE), offering a pratical coarse-grained selector in reliable CL model selection.}

% \textblue{Crucially, ADS acts as a effective coarse-grained filter: the lowest 5–10\% reliably identifies the most stable architectures, while the highest 20\% (likely miscalibrated on prior tasks) can be safely pruned.} 

% [Main Point/My Claim] In order to figure out the above problem, we propose a theoretical framework, which \textbf{first derive architecture-driven shift (ADS) based on heterogeneous architecture, and reveal that architecture provides a structural prior that captures the logit shift tendency}: for trainable model who well optimized on prior tasks, higher ADS is associated with a larger logit shift after training on the current task. 

\end{abstract}

\section{Introduction} \label{sec:introduction}
With the growing capacity of pretrained deep neural networks (DNNs), Continual Learning (CL) is a practical paradigm that requires a pre-trained model sequentially adapt to a stream of new tasks (Plasticity) while maintaining performance on the original task as much as possible (Stability)~\citep{li2017learning, wu2019large}. For different applications, we can select a more suitable model from several pre-trained models according to the ``Plasticity-Stability" trade-off. For instance, in scenarios where stability is prioritized, we select the model that has a higher stability. Logit shift is a natural and general proxy for quantifying ``Plasticity-Stability" trade-off in CL~\citep{benjamin2018measuring, guha2024diminishing}, yet its computation is expensive because it requires a full training process on new tasks, forcing researchers to rely on CL metrics (e.g. average accuracy and forgetting) instead. 

Since CL metrics based on output are computed directly from the model's logits, any architectural influence observed on these metrics can originate from an underlying architectural dependency within the logit shift itself. Prior studies have observed correlations between architectural variables and CL metrics, suggesting that that \textbf{logit shift contains an inherent architecture dependency that may be computed based on the architectural variables alone, enabling us to bypass the full training process.} Specifically, enlarging the width of hidden layers~\citep{mirzadeh2022wide} or overall model scale~\citep{ramasesh2021effect} serves as a simple method for improving learning accuracy and mitigating catastrophic forgetting\footnote{Catastrophic Forgetting refers to the phenomenon where models sequentially adapt to new tasks, requiring parameter updates, which inadvertently disrupt the representations of previous tasks in CL.~\citep{mccloskey1989catastrophic}}. Yet these studies remain largely at the level of the CL metric, which is merely downstream consequences. To gain a more fundamental mechanistic understanding, it is crucial to analyze logit shift directly. Although there has been growing interest in modeling the relationship between architectures and logit shift~\citep{guha2024diminishing, kim2025measuring}, these studies rely on the assumption of a homogeneous architecture (uniform width of hidden layers), which is insufficient to provide practical guidance for real-world model selection. This motivates us to establish a more comprehensive theoretical analysis to cover heterogeneous architecture. 

% i.e., to answer the question: given a model trained sequentially on several new tasks, what relationship between heterogeneous architecture (variable widths and depths) and logit shift on prior tasks can be theoretically identified?
% To answer the problem above, 

In this paper, we propose a theoretical framework that explicitly models the relationship between the heterogeneous architecture of the typical fully-connected neural network (FNN) and logit shift, which we define as Architecture-driven Shift (ADS). As mentioned above, logit shift contains an inherent architecture dependency that is directly determined by models' width and depth, which reduce computational resource costs. Therefore, we conceptually decouple logit shift into two components: (1) The architecture dependency and (2) the data dependency that is efficiently calibrated through a subset of the dataset. 

To theoretically construct ADS, our analysis proceeds in both static structural properties and dynamic optimization behaviors. On the static structural level, we formulate the mathematical link between the width of hidden layers and the spectral norm of the weight gradients (Proposition~\ref{prop:gradient_scaling_for_weight_matrix}). Specifically, we characterize how the spectral norm inherently scales with layer width, driven by the varying energy of forward activations. Concurrently, on the dynamic optimization level, we decouple the adaptation process into macro- and microscopic phenomena. Macroscopically, we establish how the width of hidden layers constrains the scale of the optimization trajectory (such as parameter displacement) during new task learning (Lemma~\ref{lemma:pathlen_scaling_law}). Microscopically, we reveal an interesting property of task interference: as networks grow wider, random high-dimensional noise asymptotically diminishes, purifying the cross-task conflicts into a deterministic rule governed by semantic task representations rather than random gradient conflicts (\eqref{eq:theorem_depth_term}).

By unifying static structural properties and dynamic optimization behaviors, we derive ADS, which can be efficiently computed using only a few calibration samples, enabling architectural variables to capture the logit shift tendency and paving the way for large-scale CL model selection. Specifically, for reliable CL model selection, we show that ADS inherently correlates with Expected Calibration Error (ECE), add a intuitive description, enabling it to serve as a lightweight coarse-grained selector without full training, add reference.

The main contributions can be summarized as follows:
(1) \textbf{Theoretical Framework for Heterogeneous Architecture of FNN.} We present the first theoretical framework to analyze a finer-grained object, i.e., logit shift, instead of the output-based metrics in CL with realistic, heterogeneous architectures (variable widths and depths). This bridges a critical gap in CL theory, as prior analyses were restricted to simplified, uniform-width models and thus could not provide practical guidance for real-world model selection. (2) \textbf{Mechanistic Decomposition of logit shift.} Our framework provides a novel mechanistic decomposition of the logit shift. We demonstrate that this complex phenomenon is governed by the synthesis of three distinct analytical components: a static scaling law of the weight gradients (Proposition~\ref{prop:gradient_scaling_for_weight_matrix}), the macroscopic dynamics of the optimization trajectory (Lemma~\ref{lemma:pathlen_scaling_law}), and the microscopic purification of task semantic interference in wide networks (\eqref{eq:theorem_depth_term}). This decomposition offers an unprecedented level of mechanistic insight into how architecture shapes logit-space dynamics. (3) \textbf{Empirical Validation and Application.} We provide comprehensive empirical evidence confirming the predictions of our theoretical framework across diverse architectures of FNN and Transformer. Through comprehensive experiments across a diverse suite of tasks (ranging from easy datasets like Split/Rotated MNIST to more complex datasets such as CIFAR-100 and ImageNet subsets) and both transfer and continual learning paradigms, we demonstrate that the ADS maintains a strong monotonic correspondence with the empirically observed logit shift. Crucially, we validate ADS as a lightweight selector for ECE, which is a well adopted metric for reliable CL model selection.

\section{Related Works}
% Given that the unified manifestation of parameter updates and representational shifts is the model's \textbf{logit shift},

\subsection{Methodological Approaches and Computational Bottlenecks in CL}
Continual Learning (CL) aims to enable deep neural networks to sequentially adapt to a stream of new tasks. A fundamental challenge in this paradigm is catastrophic forgetting\footnote{First discussed by \citet{mccloskey1989catastrophic}.}, where updating parameters for current tasks disrupts previously learned representations. Extensive research has concentrated on addressing this problem from three main perspectives: regularization-based, replay-based, and optimization-based methods.

Elastic Weight Consolidation (EWC)~\citep{kirkpatrick2017overcoming}, one of the classic regularization-based methods, estimates parameter importances by calculating the diagonal of the Fisher Information Matrix after learning a task. It mitigates forgetting by penalizing changes to these critical parameters during subsequent sequential training. 
In contrast, replay-based approaches, such as Hindsight Experience Replay (HER)~\citep{aygun2022proving}, focus on the data-level. They store a small subset of samples from previous tasks in an episodic memory and interleave them with current task data during training to preserve learned knowledge. 
Rather than manipulating parameters or datasets, Orthogonal Gradient Descent (OGD)~\citep{farajtabar2020orthogonal} directly intervenes in the optimization space. OGD projects the parameter gradients of new tasks onto the orthogonal complement of the subspace spanned by the gradients of prior tasks, mathematically ensuring that the predictions on earlier tasks remain invariant under gradient updates.

While these methodologies exhibit strong empirical performance, they all share a common limitation: they rely heavily on computationally expensive active training interventions (such as calculating high-dimensional Fisher matrices, managing memory buffers, or computing gradient projection matrices during training). This hinders large-scale, pre-training stage model selection. 

\subsection{Architectural Influence and Training Dynamics in CL}
% To understand how the structural properties of deep networks intrinsically affect logit shift, recent research has shifted its perspective from algorithms to architecture. 
To address this bottleneck, recent research has explored the possibility of performing lightweight, pre-training model selection by analyzing the model's intrinsic architecture, rather than relying on full downstream training runs. For instance,~\citet{mirzadeh2022wide} demonstrated that simply enlarging the network width significantly mitigates catastrophic forgetting. They hypothesized that wider networks naturally exhibit greater gradient orthogonality and align with the ``lazy training'' regime~\citep{chizat2019lazy}, where parameters move minimally from their initialization, thus preserving learned representations. Mathematically, this lazy training dynamic in overparameterized networks is closely characterized by the Neural Tangent Kernel (NTK) theory~\citep{jacot2018neural, lee2019wide}. Under this infinite-width limit, several theoretical works have analyzed the generalization guarantees of gradient projection methods~\citep{bennani2020generalisation} and have mathematically bounded catastrophic forgetting through the NTK overlap matrix~\citep{doan2021theoretical}. Extending beyond these idealized infinite-width regimes to practical settings,~\citet{lu2024revisiting} conducted a comprehensive empirical analysis of various finite-width architectural dimensions (such as depth, activation functions, and normalization layers) to map their impact on CL metrics.

These macroscopic studies successfully established the empirical link between architecture and downstream CL metrics (e.g. average accuracy and forgetting), which are too coarse to realize and analyze the trade-off between plasticity (the ability to acquire new knowledge) and stability (the preservation of historical knowledge). To theoretically understand how architecture intrinsically influences the trade-off, recent literature~\citep{benjamin2018measuring,guha2024diminishing,kim2025measuring} has shifted focus from macroscopic performance to the model's logit space, utilizing logit shift as a key metric, which denotes the discrepancy between the logits of past classes predicted by the current model and those predicted by the model immediately after learning those classes.
The work most closely related to ours is~\citet{guha2024diminishing}, which initiated the formal theoretical analysis of these phenomena. They leveraged perturbation theory to analyze Feed-Forward Networks (FFNs), proving that increasing network width directly bounds the logit shift on prior tasks, although with diminishing returns. 

% Prior studies~\citep{guha2024diminishing, kim2025measuring} have analyzed logit shift as a finer-grained object than downstream output metrics. Consequently, balancing the trade-off of ``Plasticity-Stability" in logit space is a critical problem naturally spawned by catastrophic forgetting.

Despite these valuable advancements, a major theoretical gap remains: existing studies~\citet{guha2024diminishing,kim2025measuring} rely on the simplification of a \textbf{homogeneous architecture}, i.e., assuming uniform hidden layer widths across the network. However, real-world models are highly heterogeneous, featuring varying layer-wise widths and complex block depths. 
Our work directly addresses this gap by presenting a theoretical framework designed for heterogeneous architectures. By conceptually decoupling the logit shift into an architecture dependency and a data dependency, we propose the Architecture-driven Shift (ADS), which provides a theoretically grounded, computationally efficient proxy that captures the trajectory of logit shift without full training, facilitating large-scale model selection.

\section{Preliminary}
\subsection{Notation}
Consider a Fully-Connected Neural Network (FNN) $f_{t}$ trained on the $t$-th task, with $L$ layers and hidden layer widths $\boldsymbol{w}=[w^{(1)},\dots,w^{(L)}]$. Let $w^{(0)}=\text{dim}_{in}$ and $w^{(L+1)}=\text{dim}_{out}$ denote the input and output dimensions, respectively. For any layer $l\in \{1, \dots, L\}$, we define forward propagation and backpropagation as follows. During forward propagation, the input $\boldsymbol{x}$ is passed through each layer $l$ to compute the activation $\boldsymbol{a}^{(l)} = \sigma(\boldsymbol{z}^{(l)}) = \sigma(\Theta^{(l)} \boldsymbol{a}^{(l-1)})$, where $\Theta^{(l)}$ is the weight matrix of layer $l$, and $\sigma$ is the activation function. Following the convention in this area~\cite{guha2024diminishing}, we ignore the bias terms for simplicity. In the backpropagation phase, we define the error signal $\delta^{(l)}$ of the $l$-th layer, which represents the gradient of the loss $\mathcal{L}$ with respect to the layer's pre-activation $\boldsymbol{z}^{(l)}$. 
% Based on the chain rule, the error signal can be recursively computed starting from the output layer and propagated backward through the network. (see Proposition~\ref{app_lemma:the_recursive_formula_for_the_error_signal})     \begin{align}
%     \delta^{(l)} = \Theta^{(l+1)\top} \delta^{(l+1)} \odot \sigma'(z^{(l)}),
% \end{align}
% where $\odot$ represents the Hadamard product (element-wise multiplication), $(\cdot)^{\top}$ and $\sigma'(\cdot)$ represent the transpose and the first-order derivative of the activation function, respectively.
Moreover, we define the sensitivity of the true-class logit after training on task $t$ for $S$ steps as $\boldsymbol g_{\mathrm{old}}:=\nabla_\Theta f_t^{(y)}(\boldsymbol{x})$ where $f_t^{(y)}(\boldsymbol{x})$ denotes the scalar logit associated with the ground-truth class $y$. We further define the update direction at the $s$-th training step on task $i+1$ as $\boldsymbol{g}_{new}(s) := \nabla_{\Theta_{(t+1, s)}} \mathcal{L}_{t+1}$. To quantify the microscopic task conflict during adaptation (as mentioned in Section~\ref{sec:introduction}), let $\cos \theta^{(l)}$ be their cosine similarity in the $l$-th layer.

\subsection{Problem Setup}
\label{sec:problem_setup}
Formally, we formulate the process of continual learning (CL) as a sequential learning task. A heterogeneous ReLU FNN, parameterized by $\Theta$, with variable width $\boldsymbol{w}$ and depth $L$, is trained on a stream of datasets $D_{1},\dots, D_{T}$. Each task $t$ involves $S$ steps of supervised learning on $D_{t}=(\mathcal{X}_t, \mathcal{Y}_t)$, and we assume $S$ steps of training enable the model to perform well on $D_{t}$. Under this sequential setting, we aim to learn a ReLU FNN that requires navigating the fundamental ``Plasticity-Stability" trade-off. Specifically, the model must exhibit the ability to adapt to each new task $D_t$ while retaining performance on previous tasks $D_{1:t-1}$, without explicit access to their data.  To mathematically quantify the trade-off, we adopt the definition of \textit{logit shift} originally proposed by~\citet{benjamin2018measuring, guha2024diminishing} as our core analytical objective, which is defined as the expected variation in the logit space:
\begin{equation}\label{eq:logit_shift}
\mathcal{S}_{t \to t+1} = \mathbb{E}_{\boldsymbol{x}\sim D_{1:t}} \left[ \| f_{t+1}(\boldsymbol{x}; \Theta_{t+1})-f_{t}(\boldsymbol{x}; \Theta_t) \|_2 \right],
\end{equation}
where $f_{t}(\boldsymbol{x}; \Theta_t)$ is simplified as $f_{t}(\boldsymbol{x})$ for brevity in the following discussion.

\textbf{Our Objective.}
However, computing $\mathcal{S}_{t \to t+1}$ requires a full training on the complete dataset $D_T$, making it computationally prohibitive for large-scale model selection. Furthermore, we recognize that $\mathcal{S}_{t \to t+1}$ inherently contains an architectural dependency that may offer a theoretical pathway to capture the logit shift without conducting the full training on the complete dataset.

However, since the training process is fundamentally driven by data, a pure decoupling of architecture and data is theoretically unattainable. Instead, our strategy is to explicitly extract the architecture dependency (governed by $\boldsymbol{w}$ and $L$) while capturing the unavoidable data-dependent factors of optimization dynamics through a lightweight calibration process on a subset of data.

Therefore, our primary objective is to formulate a proxy, Architecture-Driven Shift (ADS), that captures the tendency of $\mathcal{S}_{t \to t+1}$ primarily using the architectural variables ($\boldsymbol{w}$ and $L$). Mathematically, we aim to conceptualize a decomposition function $\mathcal{F}$ such that:
\begin{equation}
\mathcal{S}_{t \to t+1} \propto \text{ADS} = \mathcal{F}\Big(\boldsymbol{w}, L, D_{sub}\Big),
\end{equation}
where $D_{sub} \subset D_{1:t}$ is a subset used to calibrate the data-dependent factors. Formalizing $\mathcal{F}$ enables us to evaluate the stability/plasticity of models with heterogeneous architectures in a lightweight manner.

\subsection{Theoretical Groundings}
Before presenting our main theoretical framework, we introduce the necessary theoretical groundings as follows. First, to ensure the network operates in a trainable regime where the aforementioned error signals do not vanish or explode exponentially with depth, we rely on the following standard prior:

\begin{assumption}[Non-degeneracy of Layer-wise Weights]
    \label{ass:non-degeneracy_of_layer-wise_weight}
    Consider an $L$-layer FNN with a Lipschitz continuous activation function $\sigma(\cdot)$ (where $\sigma(0)=0$) and zero biases. The layer-wise weight matrices $\Theta^{(l)}$ follow a non-degenerate distribution with variance satisfying:
    \begin{align}
        \operatorname{Var}(\Theta^{(l)}) \gg 0, \qquad \forall \, l=1,\dots,L \iff \mathbb{E}[(\Theta^{(l)})^2] \gg \big(\mathbb{E}[\Theta^{(l)}] \big)^2.
    \end{align}
    This assumption is theoretically grounded in standard initialization and guarantees second-order isotropy (further details in Assumption~\ref{app_ass:non-degeneracy_of_layer-wise_weight}).
\end{assumption}

Furthermore, microscopically analyzing optimization process in CL scenarios requires understanding how gradients from sequential tasks interact within the high-dimensional space. To model this gradient alignment phenomenon, we introduce next lemmas:

\begin{lemma}[Levy's Lemma on the Sphere~\citep{vershynin2018high}]
    \label{lemma:levys_lemma}
    Let $\boldsymbol{v}_{\text{noise}}$ be a random vector uniformly distributed on the unit sphere in the effective parameter space $\mathbb{R}^{D_{\text{eff}}^{(l)}}$. For any $1$-Lipschitz function $F$, such as the gradient projection $F(\boldsymbol{v}) = \boldsymbol{g}_{\text{old}}^\top \boldsymbol{v}$ (assuming $\|\boldsymbol{g}_{\text{old}}\|_2=1$), and for any scalar $\epsilon > 0$, the probability measure satisfies:
    \begin{align}
        \label{eq:levys_lemma_application}
        P\left( |\boldsymbol{g}_{\text{old}}^\top \boldsymbol{v}_{\text{noise}}| \geq \epsilon \right) \leq 2 \exp\left( -\frac{D_{\text{eff}}^{(l)} \epsilon^2}{2} \right).
    \end{align}
\end{lemma}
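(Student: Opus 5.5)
The plan is to prove the bound as an instance of Gaussian concentration for Lipschitz functions on the sphere. Write $n = D_{\text{eff}}^{(l)}$ and $\mathbb{S}^{n-1}$ for the unit sphere in $\mathbb{R}^n$. The target is the two-sided tail $|F(\boldsymbol v) - m| \ge \epsilon$, where $m$ is a central value of $F$; for the linear functional $F(\boldsymbol v)=\boldsymbol g_{\text{old}}^\top \boldsymbol v$ we have $m=0$. First, $F$ is $1$-Lipschitz with respect to the Euclidean metric, since $|F(\boldsymbol u)-F(\boldsymbol v)| \le \|\boldsymbol g_{\text{old}}\|_2\|\boldsymbol u-\boldsymbol v\|_2 = \|\boldsymbol u-\boldsymbol v\|_2$. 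Second, because the uniform measure is invariant under $\boldsymbol v\mapsto -\boldsymbol v$, $F(\boldsymbol v_{\text{noise}})$ is symmetric, so $0$ is both its mean and its median.

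The cleanest route to the exact constant $2\exp(-n\epsilon^2/2)$ uses the rotation invariance of the uniform measure. Rotate so that $\boldsymbol g_{\text{old}}=\boldsymbol e_1$. Then $F(\boldsymbol v_{\text{noise}}) = v_1$, the first coordinate of a uniform point on $\mathbb{S}^{n-1}$. The event $\{v_1\ge\epsilon\}$ is a spherical cap. I would bound its normalized measure by the classical cap estimate $\sigma(\{v_1\ge \epsilon\})\le e^{-n\epsilon^2/2}$.

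This estimate can be proved directly. Every point of the cap lies in the ball of radius $\sqrt{1-\epsilon^2}$ centred at $\epsilon \boldsymbol e_1$, at least when $\epsilon\le 1/\sqrt2$. Comparing the volume of the cone over the cap (with apex at the origin) to the volume of the unit ball then gives a ratio at most $(1-\epsilon^2)^{n/2}\le e^{-n\epsilon^2/2}$. For $\epsilon>1/\sqrt2$, I would use the ball of radius $1/(2\epsilon)$ centred at $\boldsymbol e_1/(2\epsilon)$ instead, which yields the even smaller bound $(2\epsilon)^{-n}$. When $\epsilon>1$ the cap is empty, so that case is trivial. Applying the same estimate to $\{v_1\le-\epsilon\}$ by symmetry and taking a union bound gives the factor $2$.

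Alternatively, I could simply invoke the general concentration inequality on the sphere from \citet{vershynin2018high}, $P(|F-\mathbb{E}F|\ge\epsilon)\le 2\exp(-c\,n\epsilon^2)$. Specializing to the linear case via the cap argument above is what fixes $c=1/2$.

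The main obstacle is the constant. General Lévy-type statements often come with an unspecified absolute constant $c$, or with $n-1$ or $n-2$ in place of $n$. The displayed bound in \eqref{eq:levys_lemma_application} is exactly $e^{-n\epsilon^2/2}$, so I would rely on the explicit cone-volume comparison above rather than the abstract isoperimetric version. I would also check that the edge regimes $\epsilon\ge 1/\sqrt 2$ and $\epsilon > 1$ are covered.

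Finally, I would note that the lemma is applied in later sections with $\boldsymbol v_{\text{noise}}$ restricted to the effective parameter space of layer $l$. The only hypotheses needed are uniformity of $\boldsymbol v_{\text{noise}}$ on that sphere and $\|\boldsymbol g_{\text{old}}\|_2=1$; neither Assumption~\ref{ass:non-degeneracy_of_layer-wise_weight} nor any architectural quantity enters.
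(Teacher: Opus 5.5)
Your proof is correct, and it takes a genuinely different route from the paper. The paper gives no argument of its own for this lemma. It cites the general concentration of Lipschitz functions on the sphere from \citet{vershynin2018high}. In the proof of Proposition~\ref{app_prop:signal_noise} it justifies the display only by noting that $F(\boldsymbol v)=\boldsymbol g_{\text{old}}^\top\boldsymbol v$ is $1$-Lipschitz and that $\boldsymbol v_{\text{noise}}$ is isotropic. That citation buys generality: it covers any $1$-Lipschitz $F$, centred at its mean or median. However, in the form given there the bound carries an unspecified absolute constant, so by itself it does not yield the exact exponent $D_{\text{eff}}^{(l)}\epsilon^2/2$. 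Your argument rotates $\boldsymbol g_{\text{old}}$ to $\boldsymbol e_1$, bounds each cap by the cone-volume comparison, and takes a union bound over the two caps. It handles only linear $F$, but that is the only case the display asserts and the only case used later. It delivers exactly $2\exp(-D_{\text{eff}}^{(l)}\epsilon^2/2)$ with no hidden constants, so your version justifies the stated constant, while the paper's citation supports it only up to constants.

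There are two details to tighten. First, the condition $\epsilon\le 1/\sqrt2$ is not needed for the cap. The cap lies in the ball of radius $\sqrt{1-\epsilon^2}$ about $\epsilon\boldsymbol e_1$ for every $\epsilon\in[0,1]$, since $\|\boldsymbol x-\epsilon\boldsymbol e_1\|_2^2=1-2\epsilon x_1+\epsilon^2\le 1-\epsilon^2$. The condition is needed for the apex: the origin is at distance $\epsilon$ from the centre. Convexity of the ball then places the whole cone inside it. Second, the claim $(2\epsilon)^{-n}\le e^{-n\epsilon^2/2}$ for $1/\sqrt2<\epsilon\le 1$ deserves a line of proof. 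The function $\log(2\epsilon)-\epsilon^2/2$ has derivative $1/\epsilon-\epsilon>0$ on $(0,1)$ and is positive at $\epsilon=1/\sqrt2$.

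Finally, in Proposition~\ref{app_prop:signal_noise} the unit vector $\boldsymbol g_{\text{old}}$ lives in the full $N^{(l)}$-dimensional layer space, while $\boldsymbol v_{\text{noise}}$ lies in a $D_{\text{eff}}^{(l)}$-dimensional subspace. Your argument still applies if you replace $\boldsymbol g_{\text{old}}$ by its orthogonal projection onto that subspace. That projection has norm at most $1$, which can only shrink the tail.
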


\begin{lemma}[Asymptotic Vanishing of the Effective Dimension]
    \label{lemma:asymptotic_vanishing} 
    Let the gradient space at layer $l$ admit the orthogonal decomposition $\mathbb{R}^{N^{(l)}} = V_{\mathrm{signal}} \oplus V_{\mathrm{noise}}$, where $N^{(l)} = w^{(l)} w^{(l-1)}$ denotes the nominal parameter dimension. Suppose the signal subspace dimension $\dim(V_{\mathrm{signal}}) = k$ is independent of model width. Then in the wide-network limit $w^{(l)} \to \infty$,
    \begin{align}
        \lim_{w^{(l)} \to \infty} \sqrt{\frac{\log(L/\delta)}{D_{\text{eff}}^{(l)}}} = 0.
    \end{align}
\end{lemma}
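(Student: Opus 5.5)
The claim reduces to showing that the effective dimension $D_{\text{eff}}^{(l)}$ diverges as $w^{(l)}\to\infty$. The quantity $\log(L/\delta)$ does not depend on $w^{(l)}$: the depth $L$ and confidence level $\delta$ are fixed while we take the width limit. So
\begin{align}
\sqrt{\frac{\log(L/\delta)}{D_{\text{eff}}^{(l)}}}\to 0 \iff D_{\text{eff}}^{(l)}\to\infty ,
\end{align}
and the whole argument becomes a lower bound on $D_{\text{eff}}^{(l)}$ that grows with width.

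The first step is to pin down $D_{\text{eff}}^{(l)}$ as the dimension of the noise subspace, i.e. the space on which $\boldsymbol{v}_{\text{noise}}$ in Lemma~\ref{lemma:levys_lemma} is uniformly distributed. From the orthogonal decomposition,
\begin{align}
D_{\text{eff}}^{(l)}=\dim(V_{\mathrm{noise}})=N^{(l)}-k=w^{(l)}w^{(l-1)}-k .
\end{align}
Since $w^{(l-1)}\ge 1$ and $k$ is width-independent by hypothesis, $D_{\text{eff}}^{(l)}\ge w^{(l)}-k\to\infty$.

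If the paper instead defines $D_{\text{eff}}^{(l)}$ via some spectral or participation-ratio notion, I would use a weaker fact. By Assumption~\ref{ass:non-degeneracy_of_layer-wise_weight}, second-order isotropy makes the noise component's covariance spread over $V_{\mathrm{noise}}$ without degenerating. Hence $D_{\text{eff}}^{(l)}\ge c\,(N^{(l)}-k)$ for some constant $c>0$, and divergence still follows.

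The last step is an elementary $\epsilon$-argument. Given $\eta>0$, choose $W$ such that $w^{(l)}\ge W$ implies $D_{\text{eff}}^{(l)}>\log(L/\delta)/\eta^2$. For such widths the square root is below $\eta$, which proves the limit.

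The main obstacle is the identification of $D_{\text{eff}}^{(l)}$ itself. If it is a spectral quantity rather than the raw dimension of $V_{\mathrm{noise}}$, I have to justify from Assumption~\ref{ass:non-degeneracy_of_layer-wise_weight} that it stays proportional to $N^{(l)}-k$ and does not collapse. I would first try the exact-dimension reading. Only if that is unavailable would I fall back on the isotropy argument.
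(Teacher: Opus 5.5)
Your proposal is correct and matches the paper's proof: the paper also identifies $D_{\text{eff}}^{(l)}=\dim(V_{\mathrm{noise}})=w^{(l)}w^{(l-1)}-k$ and concludes, since $\log(L/\delta)$ does not depend on $w^{(l)}$, that the ratio vanishes as $w^{(l)}\to\infty$. Your bound $D_{\text{eff}}^{(l)}\ge w^{(l)}-k$, using $w^{(l-1)}\ge 1$, is slightly cleaner than the paper's step of holding $w^{(l-1)}$ fixed, and you do not need the spectral fallback.
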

The full proof details is offered in the appendix (Lemma~\ref{app_lemma:asymptotic_vanishing}).

Based on the groundings above, we now have enough theoretical tools to model three critical theoretical components: (1) the scaling of the spectral norm of weight gradients with layer width, (2) the relationship between architecture and the optimization path length, and (3) the asymptotic nature of task conflict in wide networks. Therefore, the expression of ADS can be derived in Section~\ref{sec:theoretical_framework} depends on these components. 

\section{Theoretical Framework Estimation} \label{sec:theoretical_framework}
\subsection{Formulation: Decomposing the logit shift}
Our primary objective is to formulate a proxy, Architecture-Driven Shift (ADS), that captures the tendency of $\mathcal{S}_{t \to t+1}$ primarily using the architectural variables ($\boldsymbol{w}$ and $L$). Because we don't have any prior knowledge about ADS, we try to model it starting from the logit shift, which is clearly defined in~\eqref{eq:logit_shift}. Furthermore, under the local linearization assumption commonly adopted in deep learning theory analyses~\citet{jacot2018neural, lee2019wide}, we analyze the logit shift using a first-order Taylor approximation around the parameters $\Theta_i$:
    \begin{align}
        \| f_{i+1}(\boldsymbol{x}) - f_{i}(\boldsymbol{x}) \|_2 \approx \| \nabla_{\Theta} f_i(\boldsymbol{x})^\top \Delta \Theta_i \|_2.
    \end{align}
Substituting the gradient flow dynamics $\Delta \Theta_i = \int_0^S \nabla_{\Theta} \mathcal{L}_{i+1}(\Theta_s) ds$, and applying the triangle inequality for integrals, we obtain:
    \begin{align}
        \| f_{i+1}(\boldsymbol{x}) - f_{i}(\boldsymbol{x}) \|_2 
        &= \left\| \int_0^S \langle \nabla_{\Theta} f_i(\boldsymbol{x}), \nabla_{\Theta} \mathcal{L}_{i+1}(s) \rangle ds \right\|_2 \\
        &\leq \int_0^S \left| \sum_{l=1}^L \langle \boldsymbol{g}_{\text{old}}^{(l)}, \boldsymbol{g}_{\text{new}}^{(l)}(s) \rangle \right| ds \label{eq:gradient_alignment_dynamics}\\
        &\leq \sum_{l=1}^L \underbrace{\|\boldsymbol{g}_{\text{old}}^{(l)}\|_2 \int_0^S \|\boldsymbol{g}_{\text{new}}^{(l)}(s)\|_2 \cdot |\cos \theta_{s}^{(l)}| ds}_{\text{Term}^{(l)}},
    \end{align}
    where $\theta_{s}^{(l)}$ is the angle between the old task gradient $\boldsymbol{g}_{\text{old}}^{(l)}$ and the time-varying new task gradient $\boldsymbol{g}_{\text{new}}^{(l)}(s)$. Next, we will analyze each term in $\text{Term}^{(l)}$ independently, with each term corresponding to a distinct theoretical component.
    
\subsection{Static Structural Property: Gradient Norm Scaling with Layer Width}
\begin{proposition} (Gradient Scaling for Weight Matrix)
    \label{prop:gradient_scaling_for_weight_matrix}
    The spectral norm of the gradient with respect to the entire weight matrix $\Theta^{(l)}$ scales as:
    \begin{align}
        \|\nabla_{\Theta^{(l)}} f_i(x)\|_2 \propto \sqrt{w^{(l-1)}}.
    \end{align}
    \begin{proof}
        See detailed proof process in Appendix~\ref{app_prop:gradient_scaling_for_weight_matrix}.
    \end{proof}
\end{proposition}

\subsection{Macroscopic Optimization Dynamics: Scaling of the Optimization Path}
\textbf{Empirical Observation (Width Scaling):} 
\begin{figure}[h]
\centering
\includegraphics[width=\linewidth]{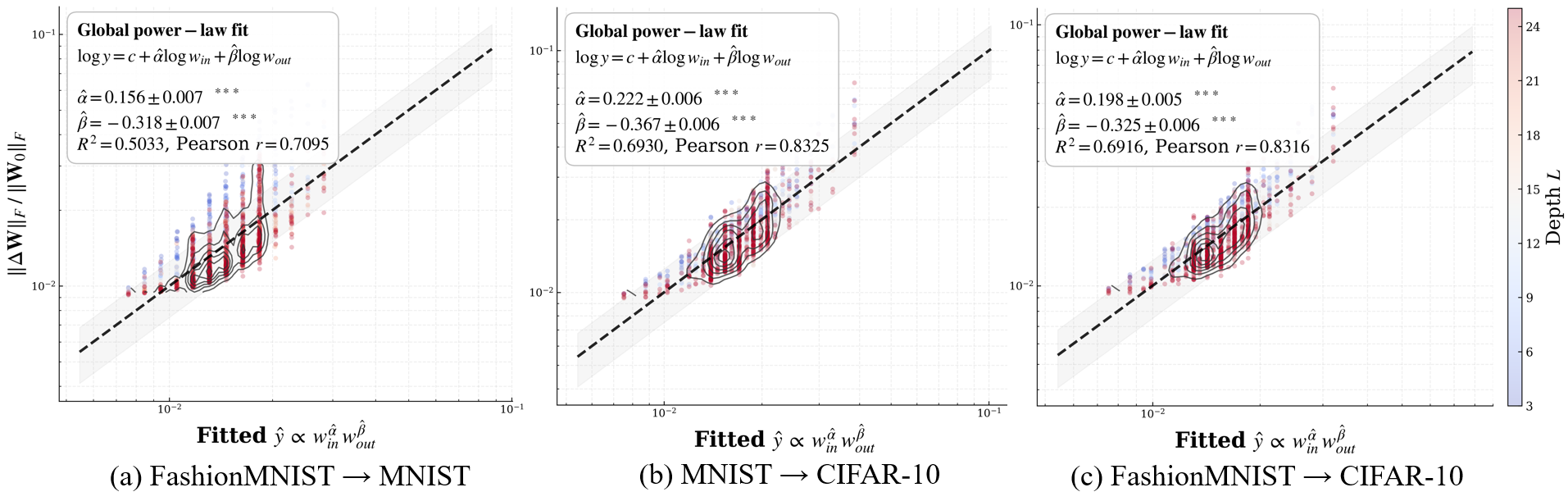}
\caption{Empirical Validation of the Width Term Assumption.}
\label{fig:width_term_assumption}
\end{figure}

Inspired by the empirical observation in Figure~\ref{fig:width_term_assumption}, we extend the width scaling assumption in~\citep{guha2024diminishing} to satisfies the variable width of real-world model as follows. \footnote{We provide more details in Appendix~\ref{app_sec:more_evidence_of_ass1}}.

\begin{assumption}
\label{ass:width_term}
Let $\Theta^{(l)}_{(t,S)}\in \mathbb{R}^{w^{(l)}\times w^{(l-1)}}$ represents the weight matrix of the $l$-th layer that has been updated $S$ steps on the t-th task. 
\begin{align}\label{equ:width_term}
\frac{\|\Theta^{(l)}_{(t,S)} - \Theta^{(l)}_{(t-1,S)}\|_F}{\|\Theta^{(l)}_{(t-1,S)}\|_F}\triangleq  \frac{\|\Delta\Theta^{(l)}_{(t,S)}\|_F}{\|\Theta^{(l)}_{(t-1,S)}\|_F} \propto w^{(l-1)\,\alpha}w^{(l)\,\beta},
\end{align}
where $\|\cdot \|_F$ denotes the Frobenius norm. $\alpha$ and $\beta$ are task-specific exponential factor.
\end{assumption}

\begin{lemma}
\label{lemma:pathlen_scaling_law}
To quantify the distance of the optimization trajectory, we define $\mathrm{PathLen}^{(l)} \triangleq \int_0^S \|\nabla_{\Theta_{(i+1, r_k)}^{(l)}} \mathcal{L}_{i+1}\|_F dr_k$, i.e., the integral of the Frobenius norm over $S$ training steps. Under Assumptions~\ref{ass:width_term} and~\ref{trajectory regularity}, the path length yields the stated proportionality:
\begin{align}
    \mathrm{PathLen}^{(l)} \propto w^{(l-1)\,\alpha}{w^{(l)\, \beta}}.
\end{align}
\begin{proof}
    We define $\mathrm{Disp}^{(l)}$ as $\|\Delta \Theta\|_F = \left\|\int_0^S \nabla_{\Theta_{(i+1, r_k)}^{(l)}} \mathcal{L}_{i+1} dr_k \right\|_F$ represents the displacement during model optimization. Based on Assumptions~\ref{ass:width_term}, $\mathrm{Disp}^{(l)}$ satisfies:
    \begin{align}
        \mathrm{Disp}^{(l)} \propto C_\gamma w^{(l-1)\,\alpha}w^{(l)\,\beta},
    \end{align}
    where $C_\gamma$ is constant related with the Frobenius norm of the weight matrix of the previous task. Under Assumption~\ref{trajectory regularity}, we have:
    \begin{align}
        \mathrm{PathLen}^{(l)} \approx C_{\mathrm{traj}} \cdot\mathrm{Disp}^{(l)} \propto w^{(l-1)\,\alpha}{w^{(l)\, \beta}},
    \end{align}
    where $C_{\mathrm{traj}}$ is constant close to 1 ($\in (1, 1.1)$ in our experiments), associating with the trajectory.
\end{proof}
\end{lemma}

\subsection{Microscopic Optimization Dynamics: Asymptotic Purification of Task Conflicts}

\textbf{Empirical Observation (The Middle-Layer Vulnerability):}
\begin{figure}[h]
\centering
\includegraphics[width=\linewidth]{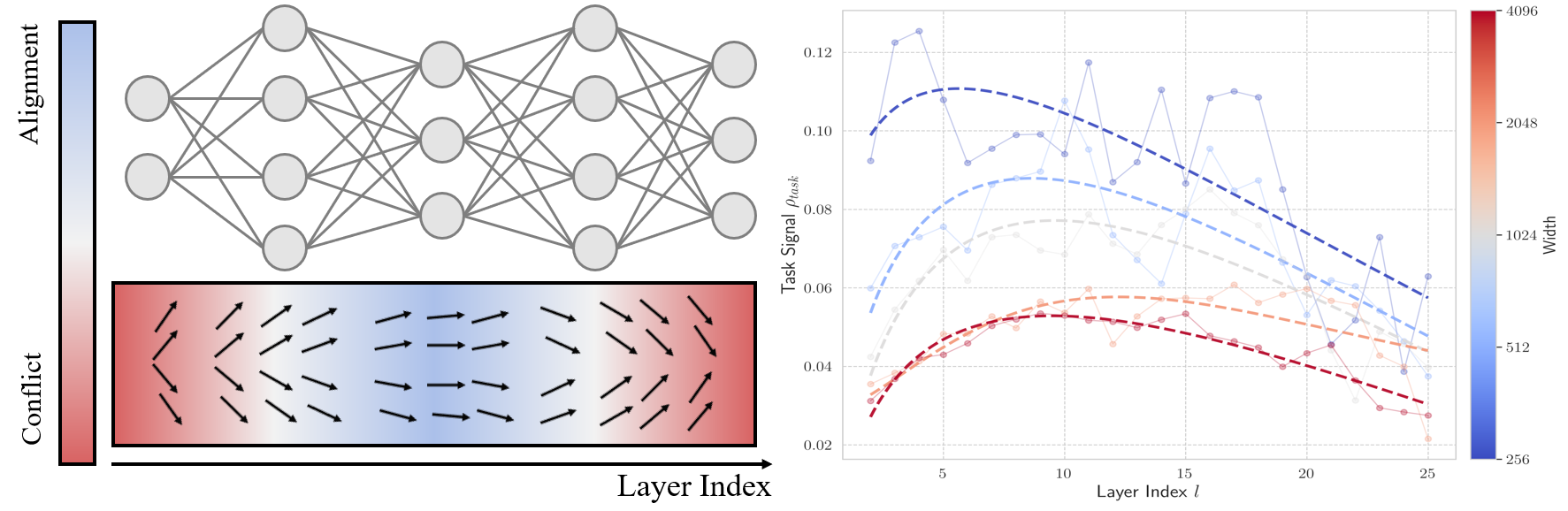}
\caption{Empirical Validation of the Middle-Layer Vulnerability.}
\label{fig:middle_layer_vulnerability}
\end{figure}

Motivated by observations in Figure~\ref{fig:middle_layer_vulnerability}, we introduce a assumption that characterize the ``U-shape curve", which is defined as \textbf{middle-layer vulnerability}. \footnote{There are more empirical evidence in Appendix~\ref{app_sec:more_evidence_of_ass2} to support Assumption~\ref{ass:depth_term}.}

\begin{assumption}
    \label{ass:depth_term}
    Let $\cos \theta$ be the cosine similarity of the output sensitivity $\boldsymbol{g}_{old}$ and update direction $\boldsymbol{g}_{new}(\tau)$.
    \begin{align}
        \rho_{task}^{(l)}\propto l^b \cdot e^{-cl},
    \end{align}
    where $b$ and $c$ is a data-dependent parameters.
\end{assumption}

\begin{proposition} (Signal-Noise Decomposition under High Dimensionality)
    \label{prop:signal_noise}
    Under the Signal-Noise Decomposition hypothesis, the cosine similarity between output sensitivity $\boldsymbol{g}_{old} \triangleq \nabla_{\Theta_{(i, S)}} f_{i}(x)$ and update direction $\boldsymbol{g}_{new}(\tau) \triangleq \nabla_{\Theta_{(i+1, r_k)}} \mathcal{L}_{i+1}$ at layer $l$ can be decomposed into a deterministic task-alignment term and a stochastic high-dimensional noise term. With probability at least $1-\delta$, the cosine similarity satisfies the following scaling law:
    \begin{align}
         \cos \theta^{(l)} \propto \left(|\rho_{\text{task}}^{(l)}| 
            +
            \sqrt{\frac{\log(L/\delta)}{D_{\text{eff}}^{(l)}}} \right),
    \end{align}
    where $D_{\text{eff}}^{(l)}$ denotes the effective dimension of the parameter space at layer $l$, and $\rho_{\text{task}}^{(l)}$ represents the semantic conflict component.
    \begin{proof}
        See in Appendix~\ref{app_prop:signal_noise}.
    \end{proof}
\end{proposition}

Based on Proposition~\ref{prop:signal_noise} above and Lemma~\ref{lemma:asymptotic_vanishing}, the gradient cosine similarity is dominated by the deterministic signal component, which satisfy Assumption~\ref{ass:depth_term}, rendering the random high-dimensional noise negligible. The relationship simplifies to:
\begin{align}
    \label{eq:theorem_depth_term}
    \cos \theta^{(l)} \approx \rho_{\text{task}}^{(l)} \propto l^b \cdot e^{-cl}.
\end{align}

\textbf{Remark:} This implies that in wide layers, any significant gradient interference ($\cos \theta^{(l)} \neq 0$) stems mainly from semantic task conflict ($\rho_{\text{task}}^{(l)}$), not random parameter collisions.

\subsection{Synthesis: Deriving the Architecture-Driven Shift (ADS)}
Incorporating Lemma~\ref{lemma:pathlen_scaling_law} ($\mathrm{PathLen}^{(l)} \propto w^{(l-1)\,\alpha}{w^{(l)\, \beta}}$) and~\eqref{eq:theorem_depth_term},  the total contribution of layer $l$ satisfies:
    \begin{align}
        \text{Term}^{(l)} 
        \lesssim \sqrt{w^{(l-1)}} \cdot w^{(l-1)\,\alpha}\cdot {w^{(l)\, \beta}} \cdot |l^b e^{-cl}|
        = (w^{(l-1)})^{\alpha+\frac{1}{2}} (w^{(l)})^{\beta} \cdot |l^b e^{-cl}|.
    \end{align}

Therefore, the logit shift satisfies:
    \begin{align}
        \| f_{i+1}(\boldsymbol{x}) - f_{i}(\boldsymbol{x}) \|_2 
        \leq \sum_{l=1}^{L} \text{Term}^{(l)} 
        \lesssim \sum_{l=1}^{L}(w^{(l-1)})^{\alpha+\frac{1}{2}} (w^{(l)})^{\beta} \cdot |l^b e^{-cl}|. \label{eq:exact_closed-form_result}
    \end{align}
    where $\alpha, \beta, b, c$ are task-specific parameters that are calibrated from a subset of Dataset $D_{sub} \subset D$.

In conclusion, we can abstract the right-hand side of~\eqref{eq:exact_closed-form_result} into $\mathcal{F}\Big(\boldsymbol{w}, L, D_{sub}\Big)$. This formulation captures the magnitude of $\mathcal{S}_{t\to t+1}$ primarily through the architectural variables $\boldsymbol{w}$ and $L$, yielding the following relationship:
\begin{equation}
\underbrace{\mathbb{E}_{\boldsymbol{x}\sim D_{1:t}} \left[ \| f_{t+1}(\boldsymbol{x}; \Theta_{t+1})-f_{t}(\boldsymbol{x}; \Theta_t) \|_2 \right]}_{\mathcal{S}_{t \to t+1}}
\propto 
\underbrace{\sum_{l=1}^{L}(w^{(l-1)})^{\alpha+\frac{1}{2}} (w^{(l)})^{\beta} \cdot |l^b e^{-cl}|}_{\text{ADS}},
\label{eq:abstract_ADS_to_answer_question}
\end{equation}

This expression fulfills the objective set forth in the Problem Setup 
(Section~\ref{sec:problem_setup}), which enables direct navigation of the ``Plasticity-Stability" trade-off across FNNs with heterogeneous architectures, without requiring full learning on the new task.

\textbf{Remark:} ADS term of~\eqref{eq:abstract_ADS_to_answer_question} can be conceptually decomposed into data dependency and architecture dependency. Specifically, $\alpha, \beta, b, c$ are data dependency because they are calibrated from a subset of Dataset $D_{sub}$; $w, l, L$ are architecture dependency.

\section{Experiments}
We organize comprehensive experiments to verify our theorem that the proposed architecture-driven shift (ADS) captures the trend of the logit shift. What's more, we analysis the correlation between ADS and Expected Calibration Error (ECE), proving that ADS is useful to decrease computational cost in reliable continual learning model selection. (See Appendix~\ref{app_sec:application} for more detailed results.)

\paragraph{Evaluation Metrics.}
To evaluate how well ADS captures the trend of the logit shift, we employ classic statistical metrics (Spearman's coefficient $r_s$, Kendall's coefficient $r_k$) along with the 95\% confidence interval (CI), and introduce a new metric, \emph{direction consistency} (DC):
\begin{align}
    DC = \frac{2 \sum_{1 \le i < j \le N_{arc}}}{N_{arc}(N_{arc}-1)} \mathrm{1}\left[\Delta ADS_{ij} \cdot \Delta S_{ij} > 0 \right],
\end{align}
where $N_{arc}$ is the number of architectures, $\mathrm{1}[\cdot]$ is the indicator function, and $\Delta(\cdot)_{ij}$ denotes the difference between the values of the corresponding variable for the $i$-th and $j$-th architectures. A DC of 100\% indicates perfect directional agreement between ADS and the logit shift, whereas 50\% corresponds to the expected agreement under random guessing.

In reliable continual learning, ECE is a traditional metric to evaluate the reliability of models' output in neural network calibration literature~\citet{guo2017calibration, li2024calibration}.
\begin{align}
    \text{ECE} = \sum_{b=1}^{15} \frac{|B_b|}{N} \cdot |\text{conf}(B_b) - \text{acc}(B_b)|,
\end{align}
where $B_b$ denotes the set of sample indices whose predicted confidence falls into the $b$-th bin and $N_{arc}$ means the number of data samples. Here, $\text{acc}(B_b) = \frac{1}{|B_b|}\sum_{i \in B_b} \mathrm{1}(\hat{y}_i = y_i)$ represents the average accuracy of the samples in bin $B_b$, and $\text{conf}(B_b) = \frac{1}{|B_b|}\sum_{i \in B_b} \hat{p}_i$ represents the average predicted confidence within that bin.

\subsection{Theoretical Framework Verification}
\begin{table}[t]
  \vspace{\baselineskip} % one line space before the table title
  \caption{Statistical correlation between \textbf{ADS} and \textbf{logit shift} across two paradigms: transfer learning in 2 tasks and sequence learning in 5 tasks. All results are based on 175 architectures of fully-connected neural network per scenario, averaged over 3 random seeds. For simplicity, we refer to MNIST, Fashion-MNIST, and CIFAR-10 as M, F, and C. *** indicates $p < 0.001$.}
  \label{tab:theorem1_verification}
  \vspace{\baselineskip} % one line space after the table title
  \begin{center}
    \begin{small}
      \begin{sc}
        \begin{tabular}{c c c c c}
          \toprule
          \multirow{2}{*}{\shortstack{\textbf{Scenario}}}
          & \multirow{2}{*}{\shortstack{\textbf{Spearman $r_s$}}}
          & \multirow{2}{*}{\shortstack{\textbf{Kendall $r_k$}}}
          & \multirow{2}{*}{\shortstack{\textbf{Direction}\\\textbf{Consistency}}}
          & \multirow{2}{*}{\shortstack{\textbf{95\% CI}}} \\
          \multicolumn{1}{c}{} & & & & \\
          \midrule
          M $\rightarrow$ F
          & 0.920*** & 0.761*** & 88.06\% & [89.1\%, 91.7\%] \\

          M $\rightarrow$ C
          & 0.881*** & 0.683*** & 84.14\% & [81.7\%, 86.2\%] \\
          
          F $\rightarrow$ M
          & 0.955*** & 0.810*** & 90.52\% & [85.6\%, 90.2\%] \\

          F $\rightarrow$ C
          & 0.731*** & 0.526*** & 76.32\% & [73.2\%, 79.5\%] \\

          \midrule
          Split M
          & 0.738*** & 0.560*** & 78.00\% & [73.5\%, 81.7\%] \\

          Split F
          & 0.814*** & 0.631*** & 82.20\% & [78.8\%, 85.1\%] \\
          
          Rotated M
          & 0.802*** & 0.619*** & 80.96\% & [77.1\%, 84.2\%] \\

          Rotated F
          & 0.960*** & 0.833*** & 91.67\% & [89.8\%, 93.1\%] \\

          \midrule

          Across scenarios & 0.850 & 0.678 & 83.98\% & - \\

          \bottomrule
        \end{tabular}
      \end{sc}
    \end{small}
  \end{center}
  \vspace{\baselineskip} % one line space after the table
\end{table}

As demonstrated in Table~\ref{tab:theorem1_verification}, across both transfer learning and sequence CL learning settings, ADS exhibits consistently strong and significant monotonic correlation with logit shift, together with high DC, indicating that the ADS captures the trend of logit shift across architectures. Furthermore, comparing to the~\citeauthor{guha2024diminishing}, ADS perform a stronger correlation with observed logit shift in Vision Transformer (ViT) as shown in Figure~\ref{app_fig:vit_theory_logit_shift} of Appendix, suggesting that the effectiveness of ADS in adopting heterogeneous architecture. 

\begin{table}[t]
  \vspace{\baselineskip}
  \caption{Spearman correlations $r_s$ between the fitted parameters and the logit shift across different scenarios. For simplicity, we refer to MNIST, FashionMNIST, and CIFAR-10 as M, F, and C, respectively. The parameters fitted on representative source tasks under the same shift regime (e.g., M $\rightarrow$ F for \emph{small shift} and M $\rightarrow$ C for \emph{large shift}) already transfer effectively to unseen scenarios, yielding strong correlations. When a data subset is available (30\%--80\%), calibrating the parameters on that subset further improves the correlation, suggesting that the parameters are highly transferable and can be efficiently estimated in practice.}
  \label{tab:parameter_calibration}
  \vspace{\baselineskip}
  \begin{center}
    \begin{small}
      \begin{sc}
        \begin{tabular}{lccccccccc}
        \toprule
        \multirow{2}{*}{\shortstack{\textbf{Scenarios}}}
        & \multirow{2}{*}{\shortstack{\textbf{Small}\\\textbf{Shift}}}
        & \multirow{2}{*}{\shortstack{\textbf{Large}\\\textbf{Shift}}}
        & \multirow{2}{*}{\shortstack{\textbf{30\%}}}
        & \multirow{2}{*}{\shortstack{\textbf{40\%}}}
        & \multirow{2}{*}{\shortstack{\textbf{50\%}}}
        & \multirow{2}{*}{\shortstack{\textbf{60\%}}}
        & \multirow{2}{*}{\shortstack{\textbf{70\%}}}
        & \multirow{2}{*}{\shortstack{\textbf{80\%}}}
        & \multirow{2}{*}{\shortstack{\textbf{Average}\\\textbf{Std}}} \\
        \multicolumn{1}{c}{} & & & & & & & & & \\
        \midrule
          \shortstack[l]{F (Rot $0^\circ$\\$\to 22.5^\circ$)}
          & 0.946 & 0.890 & 0.942 & 0.931 & 0.935 & 0.920 & 0.930 & \textbf{0.956}
          & \shortstack{$0.931$\\$_{\pm 0.020}$} \\[6pt]
          \shortstack[l]{M (Rot $0^\circ$\\$\to 22.5^\circ$)}
          & 0.660 & 0.600 & 0.638 & 0.594 & 0.622 & 0.600 & \textbf{0.672} & 0.618
          & \shortstack{$0.626$\\$_{\pm 0.029}$} \\[6pt]
          M $\rightarrow$ F
          & 0.920 & 0.832 & \textbf{0.932} & 0.920 & 0.915 & 0.919 & 0.913 & 0.879
          & \shortstack{$0.904$\\$_{\pm 0.033}$} \\[4pt]
          F $\rightarrow$ M
          & 0.956 & 0.850 & \textbf{0.959} & 0.943 & 0.956 & 0.954 & 0.952 & 0.927
          & \shortstack{$0.937$\\$_{\pm 0.037}$} \\[4pt]
          M $\rightarrow$ C
          & 0.785 & 0.881 & 0.884 & 0.877 & 0.888 & 0.866 & 0.854 & \textbf{0.891}
          & \shortstack{$0.866$\\$_{\pm 0.035}$} \\[4pt]
          F $\rightarrow$ C
          & 0.382 & 0.535 & 0.691 & 0.685 & 0.654 & \textbf{0.828} & 0.663 & 0.720
          & \shortstack{$0.645$\\$_{\pm 0.133}$} \\[4pt]
          Split M
          & 0.630 & 0.525 & 0.755 & 0.705 & 0.649 & 0.639 & 0.590 & \textbf{0.789}
          & \shortstack{$0.660$\\$_{\pm 0.086}$} \\[4pt]
          Split F
          & 0.685 & 0.480 & 0.844 & 0.860 & 0.841 & 0.045 & 0.822 & \textbf{0.895}
          & \shortstack{$0.684$\\$_{\pm 0.291}$} \\[4pt]
          Rotated M
          & 0.530 & 0.445 & 0.751 & 0.780 & 0.674 & 0.758 & 0.806 & \textbf{0.861}
          & \shortstack{$0.701$\\$_{\pm 0.143}$} \\[4pt]
          Rotated F
          & \textbf{0.976} & 0.879 & 0.951 & 0.954 & 0.969 & 0.966 & 0.956 & 0.949
          & \shortstack{$0.950$\\$_{\pm 0.030}$} \\[4pt]
        \midrule
          \shortstack[l]{Across\\Scenarios}
                 & \shortstack{$0.747$\\$_{\pm 0.203}$}
                 & \shortstack{$0.692$\\$_{\pm 0.189}$}
                 & \shortstack{$0.835$\\$_{\pm 0.118}$}
                 & \shortstack{$0.825$\\$_{\pm 0.127}$}
                 & \shortstack{$0.810$\\$_{\pm 0.143}$}
                 & \shortstack{$0.750$\\$_{\pm 0.278}$}
                 & \shortstack{$0.816$\\$_{\pm 0.132}$}
                 & \shortstack{$0.849$\\$_{\pm 0.109}$}
                 & - \\
        \bottomrule
        \end{tabular}
      \end{sc}
    \end{small}
  \end{center}
  \vspace{\baselineskip}
\end{table}

As shown in Table~\ref{tab:parameter_calibration}, the fitted parameters are highly transferable across scenarios: parameters learned from a representative source task already provide strong zero-shot capture of logit shift tendency, and calibration on only a small proportion of data further commonly improves the correlation, demonstrating that ADS can be efficiently adapted in practice with low computational requirements.

\subsection{Model Selection in Reliable Continual Learning}

\begin{figure}[h]
\centering
\includegraphics[width=\linewidth]{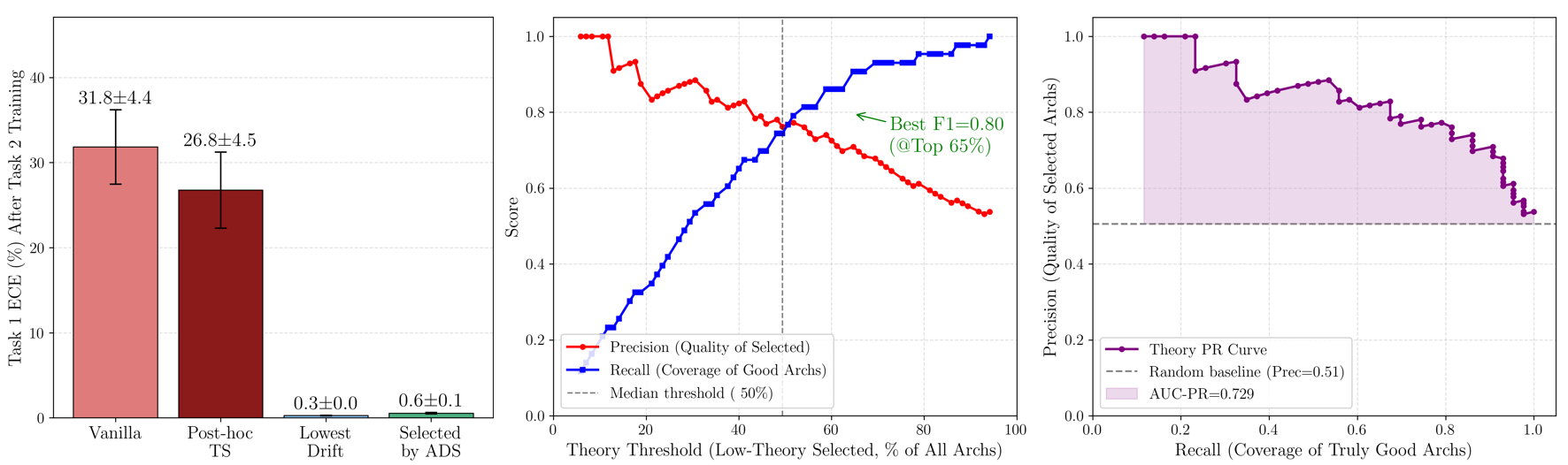}
\caption{Quantitative evaluation of the ADS-based selector: Calibration efficacy, filtering trade-offs, and statistical robustness on Split MNIST (classes 0-4 $\to$ classes 5-9). More validation scenarios are provided in Figure~\ref{fig:application_ece_full} of Appendix.}
\label{fig:application_ece}
\end{figure}

In reliable CL scenario, ECE is a national and representative metric to evaluate the reliability of models' output. Here, we define the ECE drift that less than the median of ECE drift as the positive class, enabling us to analysis the performance of ADS-based selector as shown in Figure~\ref{fig:application_ece}. More details see in Appendix~\ref{app_sec:application}.

Figure~\ref{fig:application_ece} (Left) highlights the importance of the intrinsic stability originated from model architectures. To illustrate the limitations of relying solely on post-hoc adjustments, we apply Post-hoc Temperature Scaling (TS)~\citep{guo2017calibration} to an poorly-calibrated architecture (Vanilla), yielding a marginal reduction in ECE ($31.8\% \to 26.8\%$). In contrast, the model selected by ADS achieves a remarkably low ECE of $0.6\%$, which is statistically comparable to the theoretical optimum in the candidate pool ($0.3\%$). These results demonstrate that resolving calibration drift in CL requires proper architectural selection rather than relying solely on post-hoc algorithmic patches, and that ADS successfully achieves this selection in an efficient manner.

As Figure~\ref{fig:application_ece} (Middle) is shown, ADS enables reliable pruning of the model search space within the low risk of false rejections. Specifically, when selecting only a small proportion of top-ranked architectures, the selector achieves high Precision, which means that the selected candidates are more likely to exhibit low ECE drift. In comparison, a larger threshold bring high Recall. This ensures that almost all stable architectures are successfully preserved within the selected subset, meaning that the discarded models have high ECE drift with high probability.

Figure~\ref{fig:application_ece} (Right) (and Figure~\ref{fig:application_ece_full} shown in Appendix) evaluates the ranking capability and robustness of ADS through a Precision-Recall (PR) analysis. Under diverse settings, our selector consistently yields an Area Under the PR Curve (AUC-PR) ranging from $0.61$ to $0.73$, systematically and significantly outperforming the random baseline of $0.51$. This robust and threshold-independent performance validates that the mathematical formulations of ADS capture the intrinsic property of logit shift, rather than relying on superficial, dataset-specific statistical correlations.

\section{Conclusion}
In this paper, we propose a lightweight proxy Architecture-driven Shift (ADS), which can be computed from architectural variables and a few calibration data samples, capture the trend of logit shift in continual learning (CL) well. Our theoretical framework, which is based on both static structural properties and dynamic optimization behaviors, provide a mechanistic explanation of how architecture shapes logit-space dynamics. Extensive experiments across fully-connected neural network and Transformer demonstrate that ADS is strongly and monotonically correlated with logit shift. Furthermore, empirical validation of the relationship between ADS and Expected Calibration Error (ECE) suggest that ADS can serve as a coarse-grained selector in reliable CL model selection.

\bibliography{iclr2026_conference}
\bibliographystyle{iclr2026_conference}

\appendix
\section{Detailed Theoretical Framework}
\begin{assumption}[Trajectory Regularity]
\label{trajectory regularity}
    The optimization trajectory in parameter space is free from sharp directional reversals, implying that the ratio of path length to displacement remains bounded.
    \begin{align}
        \mathrm{PathLen}^{(l)} \approx C_{\mathrm{traj}} \cdot \mathrm{Disp}^{(l)},
    \end{align}
    where $C_{\mathrm{traj}}$ is constant. There are a growing body of theoretical and empirical work supports this assumption.
    \textbf{Theoretical Motivation.} From optimization theory, \citet{gupta2021path} show that under standard regularity conditions, such as linear convergence or the Polyak–Kurdyka–Łojasiewicz (PKL) condition, the optimization trajectory admits a bounded length relative to the distance to the solution set $\mathbf{X}^*$, i.e., 
    $$
    \mathrm{PathLen}^{(l)} \;\le\; C_{\mathrm{traj}} \cdot \mathrm{Disp}^{(l)},
    $$
    where $C_{\mathrm{traj}}$ depends on problem-specific parameters. Notably, theoretical bounds for PKL functions suggest $C_{\mathrm{traj}} \propto \sqrt{\kappa}$ (where $\kappa$ is the condition number). These results establish that long, highly winding trajectories are theoretically unnecessary under commonly assumed training regimes. 
    
    \textbf{Empirical Evidence.} Empirically, the work~\citet{goodfellow2014qualitatively} demonstrates that, across a wide range of neural architectures, the loss evaluated along the straight line connecting initialization $\mathbf{x}_0$ and the final solution is typically smooth, monotonic, and nearly convex. Their linear interpolation experiments indicate that the 1-D linear subspace captures the majority of parameter variation, suggesting limited deviation from a straight path during optimization.
    More recently, \citet{singh2024hallmarks} analyze actual optimization trajectories and introduce Mean Directional Similarity (MDS) to quantify directional alignment. They report consistently high MDS values (e.g., $\omega \approx 0.76$ for ResNet-50 on ImageNet), indicating strong directional redundancy in parameter updates. Furthermore, their trajectory visualizations show that increasing model scale leads to progressively darker (i.e., higher cosine similarity) trajectory maps, reflecting increasingly straight and directionally aligned optimization paths.
    
    \textbf{Our Observation.} In our experiments, we find this regularity holds consistently, with the ratio stabilizing around $C_{\mathrm{traj}} \in (1, 1.11)$ for our models and tasks, further validating the assumption.
\end{assumption}

\begin{assumption}[Non-degeneracy of Layer-wise Weight]
    \label{app_ass:non-degeneracy_of_layer-wise_weight}
    Consider an $L$-layer fully-connected neural network with activation function $\sigma(\cdot)$ (Lipschitz continuous, $\sigma(0)=0$) and zero biases. The network is initialized in a trainable regime where gradients do not vanish exponentially with depth. Specifically, the layer-wise weights $\Theta^{(l)}$ follow a non-degenerate distribution with variance satisfying:
    \begin{align}
    \operatorname{Var}_{\Theta^{(l)}} \gg 0, \qquad \forall \, l=1,\dots,L \iff \mathbb{E}[\Theta^{(l)2}] \gg \big(\mathbb{E}[\Theta^{(l)}] \big)^2.
    \end{align}
    This assumption is both theoretically grounded and standard in practice, ensuring the network operates in a trainable regime where gradient norms do not vanish exponentially during backpropagation.
    
    \textbf{Theoretical Basis:} Standard initialization schemes~\citet{he2015delving,glorot2010understanding}, are designed specifically to satisfy this variance condition.
    \textbf{Lazy training regime:} Since the He initialization satisfies the aforementioned properties, they are naturally preserved throughout training under the lazy training regime.~\citep{chizat2019lazy}
    \textbf{Empirical Support:} The necessity of this non-degeneracy condition is validated by extensive large-scale experiments. As shown in Figure.~2 within \cite{xiao2018dynamical} and Figure.~1 and Figure.~6 within~\cite{han2025weight}, training dynamics collapse immediately when the variance drops below the critical threshold, whereas satisfying this assumption enables stable training across hundreds of layers.
\end{assumption}

\begin{lemma}[The Recursive Formula for the Error Signal]
    \label{app_lemma:the_recursive_formula_for_the_error_signal}
    We define $\delta^{(l)}=\frac{\partial \mathcal{L}}{\partial z^{(l)}}$, which is the backpropagated error signal of the $l$-th layer. The recursive formula for $\delta^{(l)}$ satisfiies:
    \begin{align}
        \delta^{(l)} = \Theta^{(l+1)\top} \delta^{(l+1)} \odot \sigma'(z^{(l)}),
    \end{align}
    where $\odot$ represents the Hadamard product (element-wise multiplication).
    \begin{proof}
        We utilize the relationship between the total differential of the loss function $\mathcal{L}$ and its gradient. The differential $d\mathcal{L}$ with respect to the pre-activation vectors $z^{(l)}$ and $z^{(l+1)}$ is given by:
        \begin{align}
            d\mathcal{L} &= \left(\frac{\partial \mathcal{L}}{\partial z^{(l)}}\right)^{\top} dz^{(l)} = \delta^{(l)\top} dz^{(l)}, \label{eq:diff_l} \\
            d\mathcal{L} &= \left(\frac{\partial \mathcal{L}}{\partial z^{(l+1)}}\right)^{\top} dz^{(l+1)} = \delta^{(l+1)\top} dz^{(l+1)}. \label{eq:diff_l_plus_1}
        \end{align}
        Recall the forward propagation formula $z^{(l+1)} = \Theta^{(l+1)} \sigma(z^{(l)})$. To find the partial derivative with respect to $z^{(l)}$, we treat the weights $\Theta^{(l+1)}$ as constant (i.e., $d\Theta^{(l+1)} = 0$). Taking the differential of $z^{(l+1)}$ yields:
        \begin{align}
            dz^{(l+1)} &= d\left(\Theta^{(l+1)} \sigma(z^{(l)})\right) \\
            &= \underbrace{(d\Theta^{(l+1)})}_{0} \sigma(z^{(l)}) + \Theta^{(l+1)} d\left(\sigma(z^{(l)})\right) \\
            &= \Theta^{(l+1)} \left(\sigma'(z^{(l)}) \odot dz^{(l)}\right).
        \end{align}
        Substituting this expression into Eq.~\eqref{eq:diff_l_plus_1}:
        \begin{align}
            d\mathcal{L} &= \delta^{(l+1)\top} \left[ \Theta^{(l+1)} \left(\sigma'(z^{(l)}) \odot dz^{(l)}\right) \right]
            = \left( \Theta^{(l+1)\top} \delta^{(l+1)} \right)^{\top} \left(\sigma'(z^{(l)}) \odot dz^{(l)}\right).
        \end{align}
        We apply the vector identity involving the Hadamard product, $\mathbf{u}^{\top} (\mathbf{v} \odot \mathbf{w}) = (\mathbf{u} \odot \mathbf{v})^{\top} \mathbf{w}$. Letting $\mathbf{u} = \Theta^{(l+1)\top} \delta^{(l+1)}$, $\mathbf{v} = \sigma'(z^{(l)})$, and $\mathbf{w} = dz^{(l)}$, we obtain:
        \begin{align}
            d\mathcal{L} = \left( \Theta^{(l+1)\top} \delta^{(l+1)} \odot \sigma'(z^{(l)}) \right)^{\top} dz^{(l)}.
        \end{align}
        By comparing this result with Eq.~\eqref{eq:diff_l}, we identify the gradient $\delta^{(l)}$ as:
        \begin{align*}
            \delta^{(l)} = \Theta^{(l+1)\top} \delta^{(l+1)} \odot \sigma'(z^{(l)}).
        \end{align*}
    \end{proof}
\end{lemma}

\begin{proposition}[Stability of the $\ell_2$ Norm of Error Signal]
    \label{app_prop:stability_of_error_signal}
    For any layer $1 \leq l \leq L$, let $\delta^{(l-1)}$ be the backpropagated error signal of the $l$-th layer. The second moment of $\delta^{(l-1)}$ satisfies the property that the error norm remains constant with respect to the layer width $w^{(l)}$:
    \begin{align} 
        \mathbb{E}\left[\|\delta^{(l-1)}\|_2^2\right] \sim \mathbb{E}\left[\|\delta^{(l)}\|_2^2\right].
    \end{align}
    \begin{proof}
        Recall the recursive formula for the error signal during backpropagation (Lemma~\ref{app_lemma:the_recursive_formula_for_the_error_signal}): $\delta^{(l-1)} = \sigma'(z^{(l-1)}) \odot \Theta^{(l)\top} \delta^{(l)}$, The element-wise representation for the $j$-th component is given by:
        \begin{align}
            \delta_{j}^{(l-1)} = \sigma'(z_j^{(l-1)}) \sum_{i=1}^{w^{(l)}} \Theta_{ij}^{(l)} \delta_{i}^{(l)}.
        \end{align}
        Consequently, the squared $\ell_2$ norm of $\delta^{(l-1)}$ can expanded as:
        \begin{align}
            \|\delta^{(l-1)}\|_2^2 = \sum_{j=1}^{w^{(l-1)}} (\delta_j^{(l-1)})^2 = \sum_{j=1}^{w^{(l-1)}} \left( \sigma'(z_j^{(l-1)}) \sum_{i=1}^{w^{(l)}} \Theta_{ij}^{(l)} \delta_{i}^{(l)} \right)^2.
        \end{align}
        We define the conditional expectation operator $\mathbb{E}_{\Theta}[\cdot \mid \delta^{(l)}, z^{(l-1)}]$, treating $\delta^{(l)}$ and $z^{(l-1)}$ as fixed quantities and taking the expectation solely over the weights $\Theta^{(l)}$. Leveraging the linearity of expectation, we obtain:
        \begin{align}
            \label{the conditional expectation of the error norm}
            \mathbb{E}_\Theta\left[\|\delta^{(l-1)}\|_2^2 \mid \delta^{(l)}, z^{(l-1)}\right]
            &= \sum_{j=1}^{w^{(l-1)}} (\sigma'(z^{(l-1)}_j))^2 \cdot
            \mathbb{E}_\Theta\left[ \left(\sum_{i=1}^{w^{(l)}} \Theta^{(l)}_{i j}\,\delta^{(l)}_i\right)^2 \,\middle|\, \delta^{(l)} \right].
        \end{align}
        The quadratic term inside the expectation can be expanded as:
        \begin{align}
            \left(\sum_{i=1}^{w^{(l)}} \Theta^{(l)}_{i j}\delta^{(l)}_i \right)^2 = \sum_{i=1}^{w^{(l)}} \sum_{p=1}^{w^{(l)}} \Theta^{(l)}_{i j}\delta^{(l)}_i \Theta^{(l)}_{p j}\delta^{(l)}_p.
        \end{align}
        Thus, the expectation over the weights becomes:
        \begin{align}
            \mathbb{E}_\Theta\left[\left(\sum_{i=1}^{w^{(l)}} \Theta^{(l)}_{i j}\,\delta^{(l)}_i\right)^2 \,\middle|\, \delta^{(l)}\right]
            &= \sum_{i=1}^{w^{(l)}} \sum_{p=1}^{w^{(l)}} \delta^{(l)}_i \delta^{(l)}_p \mathbb{E}_{\Theta}\left[\Theta^{(l)}_{i j} \Theta^{(l)}_{p j}\right].
        \end{align}
        Let $\Theta_{\cdot j}^{(l)} = [\Theta_{1 j}^{(l)}, \dots, \Theta_{w^{(l)} j}^{(l)}]^{\top}$ denote the column vector of weights connecting all neurons in layer $k$ to the $j$-th neuron in layer $k-1$. The expression above can be rewritten in vector form as $\delta^{(k)\top} \mathbb{E}_{\Theta}[\Theta^{(l)}_{\cdot j} \Theta^{(k)\top}_{\cdot j}] \delta^{(l)}$.

        We assume the weights satisfy \textbf{second-order isotropy}. Specifically, $\mathbb{E}_{\Theta}[\Theta^{(l)}_{\cdot j} \Theta^{(k)\top}_{\cdot j}] = v_{\Theta}^{(l)} I_{w^{(l)}}$, where $v_{\Theta}^{(l)} = \mathbb{E}[(\Theta_{i j}^{(l)})^2]$ represents the second moment of the weights. This assumption is weaker than requiring weights to be independent and identically distributed (i.i.d.), as it imposes constraints only on the covariance structure (uncorrelatedness) and the second moment, without specifying the underlying distribution (e.g., Gaussian vs. heavy-tailed). Under this assumption, the cross-terms vanish ($\operatorname{Cov}(\Theta_{ij}, \Theta_{pj}) = 0$ for $i \neq p$), yielding:
        \begin{align}
            \mathbb{E}_\Theta\left[\left(\sum_{i=1}^{w^{(l)}} \Theta^{(l)}_{i j}\,\delta^{(l)}_i\right)^2 \,\middle|\, \delta^{(l)}\right]
            &= v_{\Theta}^{(l)} \delta^{(k)\top} I_{w^{(l)}} \delta^{(l)} 
            = v_{\Theta}^{(l)} \| \delta^{(l)} \|_2^2.
        \end{align}
        Substituting this back into Eq.~\eqref{the conditional expectation of the error norm}:
        \begin{align}
            \mathbb{E}_\Theta\left[\|\delta^{(l-1)}\|_2^2 \mid \delta^{(l)}, z^{(l-1)}\right]
            &= v_{\Theta}^{(l)} \| \delta^{(l)} \|_2^2 \cdot \sum_{j=1}^{w^{(l-1)}} (\sigma'(z^{(l-1)}_j))^2.
        \end{align}
        Assuming the pre-activations $z_j^{(l-1)}$ within the same layer are identically distributed, we apply the Law of Large Numbers. As the width $w^{(l-1)}$ becomes sufficiently large:
        \begin{align}
            \frac{1}{w^{(l-1)}}\sum_{j=1}^{w^{(l-1)}} (\sigma'(z_j^{(l-1)}))^2 \longrightarrow \mathbb{E}_z\left[(\sigma'(z^{(l-1)}))^2\right].
        \end{align}
        For ReLU activation functions, where $\sigma'(z) \in \{0, 1\}$, it holds that $(\sigma'(z))^2 = \sigma'(z)$. Letting $c_{\sigma} = \mathbb{E}[\sigma'(z^{(l-1)})]$, the summation converges to $w^{(l-1)} c_{\sigma}$.

        By the Law of Total Expectation:
        \begin{align}
            \mathbb{E}\left[\|\delta^{(l-1)}\|_2^2\right]
            &= \mathbb{E}\left[ v_{\Theta}^{(l)} \| \delta^{(l)} \|_2^2 \cdot w^{(l-1)} c_{\sigma} \right] \\
            &= v_{\Theta}^{(l)} \cdot w^{(l-1)} c_{\sigma} \cdot \mathbb{E}\left[\|\delta^{(l)} \|_2^2\right].
        \end{align}
        Invoking Assumption~\ref{app_ass:non-degeneracy_of_layer-wise_weight}, we have $\mathbb{E}[(\Theta_{ij}^{(l)})^2] \gg (\mathbb{E}[\Theta_{ij}^{(l)}])^2$, implying $v_{\Theta}^{(l)}=\operatorname{Var}(\Theta_{ij})+\left(\mathbb{E}[\Theta_{ij}^{(l)}]\right)^2\approx \operatorname{Var}(\Theta_{ij}^{(l)})$. Because of the He initialization, $\operatorname{Var}(\Theta_{ij}^{(l)}) = \frac{2}{w^{(l)}}$ is satisfied. Furthermore, assuming $z$ is symmetric about 0, for ReLU we have $c_{\sigma} = 1/2$. Substituting these values:
        \begin{align} 
            \mathbb{E}\Big[\|\delta^{(l-1)}\|_2^2\Big]
            &=
            v_{\Theta}^{(l)} \cdot w^{(l-1)}\cdot c_{\sigma} \cdot\mathbb{E}\Big[\|\delta^{(l)} \|_2^2\Big]
            \\
            &= \frac{2}{w^{(l-1)}} \cdot w^{(l-1)} \cdot \frac{1}{2} \cdot \mathbb{E}\Big[\|\delta^{(l)} \|_2^2\Big]
            \\
            &= \mathbb{E}\Big[\|\delta^{(l)} \|_2^2\Big].
        \end{align}
        Accounting for potential noise and finite-width effects, we conclude that the error norms are of the same order:
        \begin{align*}
            \mathbb{E}\left[\|\delta^{(l-1)}\|_2^2\right] \sim \mathbb{E}\left[\|\delta^{(l)}\|_2^2\right].
        \end{align*}
    \end{proof}
\end{proposition}

\begin{lemma}[Activation Scaling Law]
    \label{app_lemma:activation_scaling_law}
    The $\ell_2$ norm of the activation vector at the $l$-th layer scakes proportionally with the square root of the layer width:
    \begin{align}
        \|a^{(l-1)}\|_2 \propto \sqrt{w^{(l-1)}}.
    \end{align}
    \begin{proof}
        Consider the pre-activation $z_i^{(l)}$ for an arbitrary neuron $i \in \{1, \dots, w^{(l)}\}$ in the current layer $l$. It is defined as the linear transformation of the activations from the previous layer:
        \begin{align}
            z_i^{(l)} = \sum_{j=1}^{w^{(l-1)}} \Theta_{ij}^{(l)} a_j^{(l-1)}.
        \end{align}
        We analyze the variance of $z_i^{(l)}$ conditioned on the input activations $a^{(l-1)}$. Utilizing the linearity of variance for independent variables (invoking the weak isotropy assumption where weights are uncorrelated, i.e., $\operatorname{Cov}(\Theta_{ij}, \Theta_{ik}) \approx 0$ for $j \neq k$), the cross-terms vanish. 

        Furthermore, applying assumption~\ref{app_ass:non-degeneracy_of_layer-wise_weight}, we have $\mathbb{E}[\Theta^2] \gg (\mathbb{E}[\Theta])^2$, i.e., $\operatorname{Var}(\Theta_{ij}^{(l)}) \approx \mathbb{E}[(\Theta_{ij}^{(l)})^2] \triangleq v_\Theta^{(l)}$. Thus, the conditional variance simplifies to:
        \begin{align}
            \operatorname{Var}\left(z_i^{(l)} \mid a^{(l-1)}\right) 
            &= \sum_{j=1}^{w^{(l-1)}} \operatorname{Var}\left( \Theta_{ij}^{(l)} a_j^{(l-1)} \mid a^{(l-1)} \right) \\
            &= \sum_{j=1}^{w^{(l-1)}} (a_j^{(l-1)})^2 \cdot \operatorname{Var}(\Theta_{ij}^{(l)}) \\
            &\approx v_\Theta^{(l)} \sum_{j=1}^{w^{(l-1)}} (a_j^{(l-1)})^2 \\
            &= v_\Theta^{(l)} \|a^{(l-1)}\|_2^2.
        \end{align}
        To ensure stable signal propagation during training (preventing vanishing or exploding gradients), two conditions are typically imposed: (1) \textbf{Signal Stability.} The variance of the pre-activations should remain on the order of a constant, i.e., $\operatorname{Var}(z_i^{(l)}) \sim \mathcal{O}(1)$. (2) \textbf{Initialization Scaling.} Following standard initialization schemes (e.g., He or Xavier initialization), the variance of the weights must be inversely proportional to the input dimension (fan-in): $v_\Theta^{(l)} \propto \frac{1}{w^{(l-1)}}$. Substituting these conditions into the variance equation yields:
        \begin{align}
            \frac{1}{w^{(l-1)}} \cdot \|a^{(l-1)}\|_2^2 \sim \mathcal{O}(1) .
        \end{align}
        Solving for the squared norm, we obtain $\|a^{(l-1)}\|_2^2 \propto w^{(l-1)}$. Taking the square root of both sides leads to the final conclusion:
        \begin{align*}
            \|a^{(l-1)}\|_2 \propto \sqrt{w^{(l-1)}}.
        \end{align*}
    \end{proof}
    \begin{remark}
        While the derivation explicitly relies on the independence assumptions valid at initialization, the scaling law $\|a^{(l-1)}\|_2 \propto \sqrt{w^{(l-1)}}$ holds throughout training in practical settings. This is ensured either by the use of normalization layers (e.g., LayerNorm), which strictly enforce feature variance, or implicitly by the requirement of \textbf{stable signal propagation} in deep networks.
    \end{remark}
\end{lemma}

\begin{proposition}[Gradient Scaling for Weight Matrix]
    \label{app_prop:gradient_scaling_for_weight_matrix}
    The spectral norm of the gradient with respect to the entire weight matrix $\Theta^{(l)}$ scales as:
    \begin{align}
        \|\nabla_{\Theta^{(l)}} f_i(\boldsymbol{x})\|_2 \propto \sqrt{w^{(l-1)}}.
    \end{align}
    \begin{proof}
        We treat the output coordinate $f_i(\boldsymbol{x})$ as the objective function. The backpropagation dynamics remain identical to the standard loss formulation, with the distinction lying solely in the initialization of the backward signal at the final layer.
        By the chain rule, the gradient with respect to the weight matrix is given by the outer product: 
        \begin{align}
            \nabla_{\Theta^{(l)}} f_i = \delta^{(l|i)} (a^{(l-1)})^\top,
        \end{align}
        where $\delta^{(l|i)}$ denotes the error signal at layer $l$ backpropagated specifically from the output node $z_i^{(L)}$, which still satisfies Proposition~\ref{app_prop:stability_of_error_signal}.

        This is a rank-1 matrix. For any rank-1 matrix formed by the outer product of two vectors $\mathbf{u}$ and $\mathbf{v}$, the spectral norm satisfies $\|\mathbf{u}\mathbf{v}^\top\|_2 = \|\mathbf{u}\|_2 \cdot \|\mathbf{v}\|_2$. Applying this property yields:
        \begin{align}
            \|\nabla_{\Theta^{(l)}} f_i\|_2 = \|\delta^{(l|i)}\|_2 \cdot \|a^{(l-1)}\|_2.
        \end{align}

        \textbf{(1) The Error Term (Proposition~\ref{app_prop:stability_of_error_signal}).} The total energy of the backpropagated error is conserved, i.e., $\|\delta^{(l|i)}\|_2 \sim 1$.

        \textbf{(2) The Activation Term (Lemma~\ref{app_lemma:activation_scaling_law}).} The activation norm scales with the width's square root: $\|a^{(l-1)}\|_2 \propto \sqrt{w^{(l-1)}}$.

        Combining these results, the norm of the gradient scales as:
        \begin{align*}
            \|\nabla_{\Theta^{(l)}} f_i\|_2 \propto 1 \cdot \sqrt{w^{(l-1)}} = \sqrt{w^{(l-1)}}.
        \end{align*}
    \end{proof}
\end{proposition}

\begin{proposition}[Signal-Noise Decomposition under High Dimensionality]
    \label{app_prop:signal_noise}
    Under the Signal-Noise Decomposition hypothesis, the cosine similarity between output sensitivity $\boldsymbol{g}_{old} \triangleq \nabla_{\Theta_{(i, S)}} f_{i}(\boldsymbol{x})$ and update direction $\boldsymbol{g}_{new}(s) \triangleq \nabla_{\Theta_{(i+1, r_k)}} \mathcal{L}_{i+1}$ at layer $l$ can be decomposed into a deterministic task-alignment term and a stochastic high-dimensional noise term. With probability at least $1-\delta$, the cosine similarity satisfies the following scaling law:
    \begin{align}
         \cos \theta^{(l)} \propto \left(|\rho_{\text{task}}^{(l)}| 
            +
            \sqrt{\frac{\log(L/\delta)}{D_{\text{eff}}^{(l)}}} \right),
    \end{align}
    where $D_{\text{eff}}^{(l)}$ denotes the effective dimension of the parameter space at layer $l$, and $\rho_{\text{task}}^{(l)}$ represents the semantic conflict component.
    \begin{proof}
        We analyze the gradient interaction in the high-dimensional parameter space. Inspired by the prior work to analysis~\citep{gur2018gradient}, we model the new task gradient $\boldsymbol{g}_{\text{new}}$using the Signal-Noise Decomposition:
        \begin{align}
            \boldsymbol{g}_{\text{new}} = \alpha \boldsymbol{u}_{\text{signal}} + \beta \boldsymbol{v}_{\text{noise}},
        \end{align}
        where $\boldsymbol{u}_{\text{signal}}$ resides in a low-dimensional subspace capturing semantic task conflicts, and $\boldsymbol{v}_{\text{noise}}$ represents the residual component, which is assumed to uniformly distributed on the unit sphere $\mathbb{S}^{D_{\text{eff}}^{(l)}-1}$ of the residual effective subspace based on the \textbf{Conditional Isotropy}.

        Assuming without loss of generality that $\|\boldsymbol{g}_{\text{old}}\|_2=1$, the cosine similarity expands as:
        \begin{align}
            \label{eq:singal_noise_cosine_similarity}
            \cos \theta^{(l)} = \boldsymbol{g}_{\text{old}}^\top \boldsymbol{g}_{\text{new}} 
            = \underbrace{\alpha (\boldsymbol{g}_{\text{old}}^\top \boldsymbol{u}_{\text{signal}})}_{\text{Deterministic: } \rho^{(l)}_{\text{task}}} + \underbrace{\beta (\boldsymbol{g}_{\text{old}}^\top \boldsymbol{v}_{\text{noise}})}_{\text{Stochastic: } \epsilon^{(l)}_{s}}.
        \end{align}

        To bound the stochastic term $\epsilon_{s}$, we employ \textbf{Levy's Lemma (Concentration of Measure on the Sphere)}. Consider the projection function $F(\boldsymbol{v}) = \boldsymbol{g}_{\text{old}}^\top \boldsymbol{v}$. Since $F$ is $1$-Lipschitz and $\boldsymbol{v}_{\text{noise}}$ is isotropically distributed, for any $\epsilon > 0$:
        \begin{align}
            \label{eq:levy's_lemma_application}
            P\left( |\boldsymbol{g}_{\text{old}}^\top \boldsymbol{v}_{\text{noise}}| \geq \epsilon \right) \leq 2 \exp\left( -\frac{D_{\text{eff}}^{(l)} \epsilon^2}{2} \right).
        \end{align}

        We aim to establish a uniform upper bound, denoted as $\epsilon_{\text{orth}}$, such that the gradient orthogonality condition holds across all $L$ layers simultaneously with high probability (e.g., $1 - \delta$).
        
        The probability of the entire network satisfying this condition (``Global Success") can be lower-bounded by the complement of the union of failure events at individual layers. By applying the \textbf{Union Bound}, we derive:
        \begin{align}
            P(\text{Global Success}) 
            &= 1 - P\left(\bigcup_{l=1}^L \{ \text{Failure at layer } l \}\right) \\
            &\geq 1 - \sum_{l=1}^L P(\text{Failure at layer } l) \\
            &= 1 - L \cdot p_{\text{fail}},
        \end{align}
        where $p_{\text{fail}} = P(|\boldsymbol{g}_{\text{old}}^\top \boldsymbol{v}_{\text{noise}}| \geq \epsilon_{\text{orth}})$ represents the failure probability for a single layer.

        We constrain the total failure probability to a small constant $\delta$. Setting $L \cdot p_{\text{fail}} = \delta \to 0$, and substituting Eq,~\ref{eq:levy's_lemma_application}, we have:
        \begin{align}
            p_{\text{fail}} = 2 \exp\left( -\frac{D_{\text{eff}}^{(l)} \epsilon_{\text{orth}}^2}{2} \right) = \frac{\delta}{L}.
        \end{align}
        Solving this equation for $\epsilon_{\text{orth}}$ yields the uniform bound:
        \begin{align}
            \epsilon_{\text{orth}} = \sqrt{\frac{2\log \left( \frac{2L}{\delta} \right)}{D_{\text{eff}}^{(l)}}}.
        \end{align}
        Consequently, for any layer $l \in \{1, \dots, L\}$, the stochastic projection noise $|\epsilon_{s}|$ is bounded with probability $1-\delta$. The noise term scales as:
        \begin{align}
            |\epsilon_{s}| \leq \sqrt{\frac{2\log \left( \frac{2L}{\delta} \right)}{D_{\text{eff}}^{(l)}}} 
            \propto \sqrt{\frac{\log(L/\delta)}{D_{\text{eff}}^{(l)}}}.
        \end{align}
        Substituting this scaling behavior back into the cosine decomposition (Eq.~\eqref{eq:singal_noise_cosine_similarity}), we obtain:
        \begin{align*}
            \cos \theta^{(l)} \propto \left(|\rho_{\text{task}}^{(l)}| 
            +
            \sqrt{\frac{\log(L/\delta)}{D_{\text{eff}}^{(l)}}} \right).
        \end{align*}
    \end{proof}
\end{proposition}

\begin{lemma}[Asymptotic Vanishing of the Effective Dimension]
    \label{app_lemma:asymptotic_vanishing} 
    Let the gradient space at layer $l$ admit the orthogonal decomposition $\mathbb{R}^{N^{(l)}} = V_{\mathrm{signal}} \oplus V_{\mathrm{noise}}$, where $N^{(l)} = w^{(l)} w^{(l-1)}$ denotes the nominal parameter dimension. Suppose the signal subspace dimension $\dim(V_{\mathrm{signal}}) = k$ is independent of model width. Then in the wide-network limit $w^{(l)} \to \infty$,
    \begin{align}
        \lim_{w^{(l)} \to \infty} \sqrt{\frac{\log(L/\delta)}{D_{\text{eff}}^{(l)}}} = 0.
    \end{align}
    \begin{proof}
        By \citet{gur2018gradient}, gradient updates during training reside predominantly in a low-dimensional subspace spanned by the top eigenvectors of the Hessian, whose dimension equals the number of output classes $k$ — a quantity determined solely by the task and invariant to model width. Since $V_{\mathrm{noise}} = V_{\mathrm{signal}}^{\perp}$, the effective noise dimension satisfies
        
        $$D_{\mathrm{eff}}^{(l)} = N^{(l)} - k = w^{(l)}w^{(l-1)} - k.$$
        
        For fixed $k$ and $w^{(l-1)}$, we have $D_{\mathrm{eff}}^{(l)} \to \infty$ as $w^{(l)} \to \infty$. Since $\log(L/\delta)$ is independent of $w^{(l)}$, it follows immediately that
        
        $$\lim_{w^{(l)} \to \infty} \sqrt{\frac{\log(L/\delta)}{D_{\mathrm{eff}}^{(l)}}} = \lim_{w^{(l)} \to \infty} \sqrt{\frac{\log(L/\delta)}{w^{(l)}w^{(l-1)} - k}} = 0.$$
    \end{proof}
\end{lemma}

\begin{theorem}[Scaling Law of Output Shift]
\label{thm:scaling_law_of_shift}
Let $f_i(\boldsymbol{x})$ and $f_{i+1}(\boldsymbol{x})$ denote the model output before and after learning task $i+1$, respectively. Under the assumption that second-order terms are negligible (dominating $<20\%$ of the variation), the shift in the output function for a previous task sample $x$ scales as:
\begin{align}
    \| f_{i+1}(\boldsymbol{x})-f_{i}(\boldsymbol{x}) \|_2 
    &\propto 
    \sum_{l=1}^L (w^{(l-1)})^{\alpha+\frac{1}{2}} (w^{(l)})^{\beta} \cdot |l^b e^{-cl}|,
    \end{align}
\end{theorem}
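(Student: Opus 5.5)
The plan is to reproduce the synthesis of Section~\ref{sec:theoretical_framework} with the second-order remainder handled explicitly. We then read the resulting bound as a proportionality, in the same sense that the paper's scaling laws use ``$\propto$''.

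\textbf{Step 1: linearize with a controlled remainder.} Write
\begin{align}
f_{i+1}(\boldsymbol{x})-f_i(\boldsymbol{x}) = \nabla_\Theta f_i(\boldsymbol{x})^\top \Delta\Theta_i + R_2(\boldsymbol{x}),
\end{align}
using Taylor's theorem with remainder around $\Theta_i$. The hypothesis that second-order terms account for less than $20\%$ of the variation gives $\|R_2\|_2 \le 0.2\,\|f_{i+1}-f_i\|_2$. This yields the two-sided control
\begin{align}
0.8\,\|f_{i+1}-f_i\|_2 \le \|\nabla_\Theta f_i^\top \Delta\Theta_i\|_2 \le 1.2\,\|f_{i+1}-f_i\|_2 .
\end{align}
So up to a constant in $[0.8,1.2]$, it suffices to study the first-order term.

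\textbf{Step 2: layerwise split.} Substitute the gradient-flow expression for $\Delta\Theta_i$. Split the inner product over layers, as in \eqref{eq:gradient_alignment_dynamics}, so the shift is controlled by $\sum_l \mathrm{Term}^{(l)}$. For each layer, bound the three factors of $\mathrm{Term}^{(l)}$ separately:
\begin{itemize}
\item Proposition~\ref{app_prop:gradient_scaling_for_weight_matrix} gives $\|\boldsymbol g_{\rm old}^{(l)}\|\propto\sqrt{w^{(l-1)}}$. This uses Proposition~\ref{app_prop:stability_of_error_signal} and Lemma~\ref{app_lemma:activation_scaling_law}.
\item For the time integral, pull out $|\cos\theta_s^{(l)}|$. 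Proposition~\ref{app_prop:signal_noise} together with Lemma~\ref{app_lemma:asymptotic_vanishing} shows it is uniformly $\approx|\rho^{(l)}_{\rm task}|\propto|l^b e^{-cl}|$ in $s$ with probability $1-\delta$ (Assumption~\ref{ass:depth_term}, \eqref{eq:theorem_depth_term}).
\item What remains is $\int_0^S\|\boldsymbol g_{\rm new}^{(l)}\|\,ds=\mathrm{PathLen}^{(l)}\propto (w^{(l-1)})^{\alpha}(w^{(l)})^{\beta}$ by Lemma~\ref{lemma:pathlen_scaling_law}. The spectral and Frobenius norms can be exchanged because the relevant objects are (near) rank-one per sample, or else up to a width-independent constant absorbed into $C_\gamma$.
\end{itemize}
Multiplying the three factors gives the summand.

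\textbf{Step 3: upgrade the upper bound to a proportionality, and take expectation.} This is the step I expect to be the main obstacle. The triangle inequality and Cauchy--Schwarz only give ``$\lesssim$''. To reverse them, I would argue as follows:
\begin{itemize}
\item By the asymptotic purification, the sign of $\langle \boldsymbol g^{(l)}_{\rm old},\boldsymbol g^{(l)}_{\rm new}(s)\rangle$ is governed by the deterministic $\rho^{(l)}_{\rm task}$ and is stable in $s$. This follows from trajectory regularity (Assumption~\ref{trajectory regularity}), since the direction does not reverse.
\item So the integral of the absolute value matches the absolute value of the integral up to a factor near $C_{\rm traj}$.
\item For the sum over layers, I would assume, as the data-dependent calibration does, that the layerwise semantic conflicts $\rho^{(l)}_{\rm task}$ share a sign, so there is no cancellation across $l$.
\end{itemize}
With those two facts, the inequalities become two-sided up to width-independent constants. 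If full sign coherence fails, I would fall back to stating the result as a tight upper bound whose constants are absorbed into the calibrated $\alpha,\beta,b,c$. That is exactly the sense in which the theorem's ``$\propto$'' should be read.

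Finally, averaging over $\boldsymbol{x}\sim D_{1:t}$ preserves the proportionality because all architectural factors are $\boldsymbol{x}$-independent. This recovers \eqref{eq:abstract_ADS_to_answer_question}.
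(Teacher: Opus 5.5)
Your Steps~1--2 are the paper's proof: first-order linearization, gradient-flow substitution, and the triangle inequality over time and layers down to $\sum_l \mathrm{Term}^{(l)}$, followed by Proposition~\ref{prop:gradient_scaling_for_weight_matrix}, Lemma~\ref{lemma:pathlen_scaling_law} and \eqref{eq:theorem_depth_term} for the three factors. One attribution needs fixing: the paper replaces $|\cos\theta_s^{(l)}|$ by a single value using Assumption~\ref{trajectory regularity}, not Proposition~\ref{app_prop:signal_noise}, whose union bound runs over layers at a fixed time rather than over $s$.

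The paper has no counterpart to your Step~3. It sums the layerwise $\lesssim$ bounds and simply asserts $\propto$, so your fallback reading is exactly what it establishes. Your sign-coherence argument for a two-sided statement is your own addition. It is consistent with reading \eqref{eq:theorem_depth_term} as a signed proportionality with a layer-independent constant, but the paper never uses it, since it only works with $|\rho^{(l)}_{\text{task}}|$.
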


\begin{proof}
    We analyze the output shift using a first-order Taylor approximation around the parameters $\Theta_i$. Following the local linearization perspective widely adopted in deep learning theory analyses~\citet{jacot2018neural, lee2019wide}, we have:
    \begin{align}
        \| f_{i+1}(\boldsymbol{x}) - f_{i}(\boldsymbol{x}) \|_2 \approx \| \nabla_{\Theta} f_i(\boldsymbol{x})^\top \Delta \Theta_i \|_2.
    \end{align}
    Substituting the gradient flow dynamics $\Delta \Theta_i = \int_0^S \nabla_{\Theta} \mathcal{L}_{i+1}(\Theta_s) ds$, and applying the triangle inequality for integrals, we obtain:
    \begin{align}
        \| f_{i+1}(\boldsymbol{x}) - f_{i}(\boldsymbol{x}) \|_2 
        &= \left\| \int_0^S \langle \nabla_{\Theta} f_i(\boldsymbol{x}), \nabla_{\Theta} \mathcal{L}_{i+1}(s) \rangle ds \right\|_2 \\
        &\leq \int_0^S \left| \sum_{l=1}^L \langle \boldsymbol{g}_{\text{old}}^{(l)}, \boldsymbol{g}_{\text{new}}^{(l)}(s) \rangle \right| ds \label{app_eq:gradient_alignment_dynamics}\\
        &\leq \sum_{l=1}^L \underbrace{\|\boldsymbol{g}_{\text{old}}^{(l)}\|_2 \int_0^S \|\boldsymbol{g}_{\text{new}}^{(l)}(s)\|_2 \cdot |\cos \theta_{s}^{(l)}| ds}_{\text{Term}^{(l)}},
    \end{align}
    where $\theta_{s}^{(l)}$ is the angle between the fixed old task gradient $\boldsymbol{g}_{\text{old}}^{(l)}$ and the time-varying new task gradient $\boldsymbol{g}_{\text{new}}^{(l)}(s)$. 
    
    We analyze the bound layer by layer:
    \begin{enumerate}
        \item \textbf{Considering $\boldsymbol{g}_{\text{old}}^{(l)}$.} From Proposition~\ref{prop:gradient_scaling_for_weight_matrix}, the gradient norm scales as $\|\boldsymbol{g}_{\text{old}}^{(l)}\|_2 = \|\nabla_{\Theta^{(l)}} f_i(\boldsymbol{x})\|_2 \propto \sqrt{w^{(l-1)}}$.
        
        \item \textbf{Considering $\int_0^S \|\boldsymbol{g}_{\text{new}}^{(l)}(s)\|_2 \cdot |\cos \theta_s^{(l)}| ds$.} 
        By Assumption~\ref{trajectory regularity}, the optimization trajectory exhibits high directional stability, implying that the angle $\theta_s^{(l)}$ remains nearly constant throughout the update. We thus approximate the instantaneous alignment by its characteristic value $|\cos \theta^{(l)}|$. 
        \begin{align}
            \int_0^S \|\boldsymbol{g}_{\text{new}}^{(l)}(s)\|_2 \cdot |\cos \theta_s^{(l)}| ds
            &\lesssim
            \left( \int_0^S \|\boldsymbol{g}_{\text{new}}^{(l)}(s)\|_2 ds \right) \cdot |\cos \theta^{(l)}| 
            = \mathrm{PathLen}^{(l)} \cdot |\cos \theta^{(l)}|
        \end{align}
        
        \item \textbf{Combination.} Incorporating Lemma~\ref{lemma:pathlen_scaling_law} ($\mathrm{PathLen}^{(l)} \propto w^{(l-1)\,\alpha}{w^{(l)\, \beta}}$) and~\eqref{eq:theorem_depth_term} ($\cos \theta \propto l^b e^{-cl}$),  the total contribution of layer $l$ satisfies:
        \begin{align}
            \text{Term}^{(l)} \lesssim \sqrt{w^{(l-1)}} \cdot w^{(l-1)\,\alpha}\cdot {w^{(l)\, \beta}} \cdot |l^b e^{-cl}| = (w^{(l-1)})^{\alpha+\frac{1}{2}} (w^{(l)})^{\beta} \cdot |l^b e^{-cl}|.
        \end{align}
    \end{enumerate}
    Summing over all layers $l=1, \dots, L$ yields the final result.
\end{proof}

\section{Detailed Validation of Theoretical Framework}
\subsection{Experimental Setup}
\paragraph{Architectural Diversity.} Unlike prior studies that typically assume uniform widths, we constructed a comprehensive pool of Fully-Connected Neural Networks (FNNs) with highly heterogeneous topologies. The network depth was varied ($L \in \{3, 5, 10, 15, 20, 25\}$), and hidden layer widths were sampled from a candidate pool $\mathcal{W} \in \{256, 512, 1024, 2048, 4096\}$. To ensure structural generality, we generated five distinct topological categories: (1) Uniform: Constant width across all layers. (2) Monotonic: strictly Increasing or Decreasing widths. (3) Non-Monotonic: Bottleneck (compression followed by expansion), Spindle (expansion followed by compression), and fully Randomized configurations. This resulted in a total of 175 unique architecture instances per task scenario.

What's more, our experiments further evaluate on convolutional neural networks (CNNs) and vision transformers (ViTs). Specifically, we adopt VGG family (VGG-13, VGG-16 and VGG-19) for CNNs. For ViTs, we adjust the dimension of matrix $Q, K, V$ and the number of Transformer blocks to obtain variable ``width" and ``depth" (width: [6, 8, 12, 16]; depth: [128, 192, 256, 384]). To enable comprehensive statistical analysis across diverse architectures, we additionally vary the expansion ratio of the feed‑forward network (FFN) as $\text{ratio}=\frac{d_{\textrm{mid}}}{d_{\textrm{in}}}$, where $d_{\textrm{mid}}$ is the hidden dimension of the FFN and $d_{\textrm{in}}$ is its input dimension, which is equal to the dimension of $Q, K, V$.

\paragraph{Training Protocol.} We performed supervised learning on standard benchmark datasets (e.g., MNIST, FashionMNIST). The networks were initialized using Kaiming Normal initialization for ReLU activation. Training was conducted using the SGD optimizer with a learning rate of $1 \times 10^{-3}$, momentum of $0.9$, and weight decay of $5 \times 10^{-4}$. We tracked the weight matrices before ($\Theta^{(l)}_{0}$) and after ($\Theta^{(l)}_{S}$) the training phase.

\subsection{Additional Evidence of Width Scaling Assumption}
\label{app_sec:more_evidence_of_ass1}
Beyond visual evidence of Assumption~\ref{ass:width_term}, we performed a log-log linear regression $\log\!\left(\frac{\|\Delta\Theta\|}{\|\Theta\|}\right)= c + \alpha \log(w_{\text{in}}) + \beta \log(w_{\text{out}})$ across thousands of layer-wise configurations sampled from networks of varying depths (from 3 to 25 layers). The quantitative results are summarized in Table~\ref{tab:width_scaling_verification}.

\begin{table}[t]
  \vspace{\baselineskip} % one line space before the table title
  \caption{Statistical Verification of the Width Scaling Law (Assumption 4.1).}
  \label{tab:width_scaling_verification}
  \vspace{\baselineskip} % one line space after the table title
  \begin{center}
    \begin{small}
      \begin{tabular}{c c c c c c c}
        \toprule
        \multirow{2}{*}{\shortstack{\textbf{Network}\\\textbf{Depth}}}
        & \multirow{2}{*}{\shortstack{\textbf{Sample}\\ \textbf{Size}\textbf{$(n)$}}}
        & \multirow{2}{*}{\shortstack{$\boldsymbol{\alpha}_{\pm \textbf{SE}}$\\\textbf{(for $w_{\text{in}}$)}}}
        & \multirow{2}{*}{\shortstack{$\boldsymbol{\beta}_{\pm \textbf{SE}}$\\\textbf{(for $w_{\text{out}}$)}}}
        & \multirow{2}{*}{\shortstack{\textbf{Pearson $r$}}}
        & \multirow{2}{*}{\shortstack{\textbf{Spearman $r_s$}}}
        & \multirow{2}{*}{\shortstack{\textbf{$p$-value}}} \\
        \multicolumn{1}{c}{} & & & & & & \\
        \midrule
        3  & 50  & $0.214_{\pm 0.023}$ & $-0.486_{\pm 0.021}$ & 0.962 & 0.961 & $< 0.001$ \\
        5  & 120 & $0.201_{\pm 0.017}$ & $-0.426_{\pm 0.017}$ & 0.917 & 0.897 & $< 0.001$ \\
        10 & 270 & $0.146_{\pm 0.016}$ & $-0.351_{\pm 0.016}$ & 0.818 & 0.858 & $< 0.001$ \\
        15 & 420 & $0.109_{\pm 0.014}$ & $-0.286_{\pm 0.014}$ & 0.752 & 0.813 & $< 0.001$ \\
        20 & 570 & $0.124_{\pm 0.013}$ & $-0.299_{\pm 0.013}$ & 0.740 & 0.812 & $< 0.001$ \\
        25 & 720 & $0.101_{\pm 0.012}$ & $-0.279_{\pm 0.012}$ & 0.726 & 0.801 & $< 0.001$ \\
        \bottomrule
      \end{tabular}
    \end{small}
  \end{center}
  \vspace{\baselineskip}
  \emph{Note:} The symbols $c$, $\alpha$, and $\beta$ in the regression equation above are introduced solely for this statistical analysis and are \textbf{NOT} related to any variables of the same name appearing in the main paper. To avoid ambiguity, we use these as generic regression coefficients (intercept and slopes) only within the scope of this validation.
\end{table}

\subsection{Additional Evidence of Middle-Layer Vulnerability}
As demonstrated in Figure~\ref{fig:more_support_for_middle_layer_vulnerability}, the middle-layer vulnerability exist across scenarios (different combination of various datasets) and model families (fully-connected neural networks and convolutional neural networks). 

\label{app_sec:more_evidence_of_ass2}

\begin{figure}[htbp]
    \centering

    \begin{subfigure}{\linewidth}
        \centering
        \includegraphics[width=\linewidth]{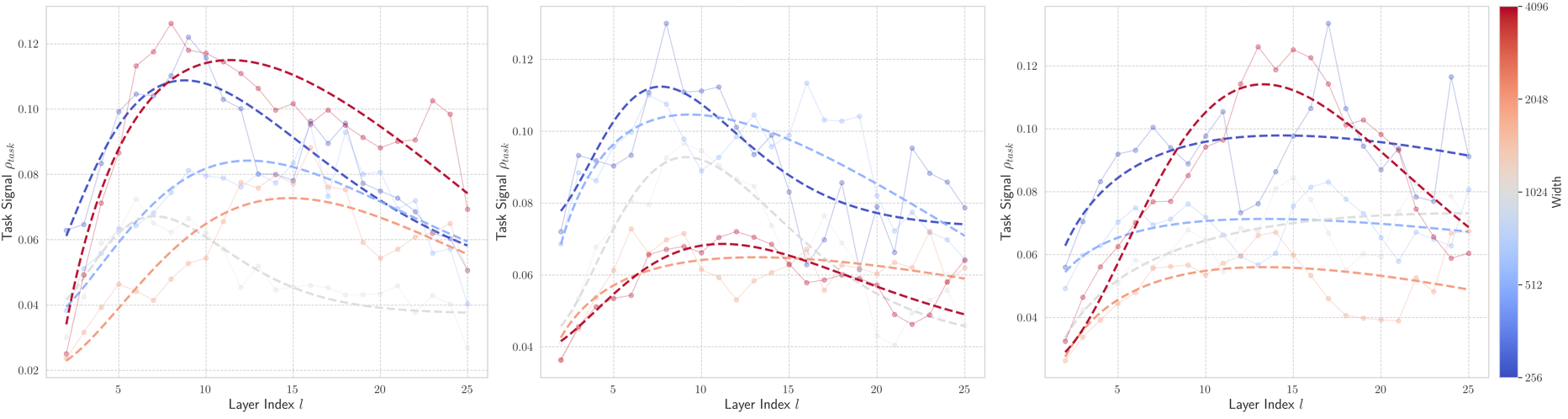}
        \caption{Validation on fully-connected neural network (FNN). Specifically, the subfigures from left to right depict the middle-layer vulnerability of FNNs under the transfer scenarios MNIST $\to$ CIFAR-10, FashionMNIST $\to$ MNIST, and FashionMNIST $\to$ CIFAR-10, respectively.}
        \label{fig:middle_layer_vulnerability_mlp}
    \end{subfigure}

    \begin{subfigure}{\linewidth}
        \centering
        \includegraphics[width=\linewidth]{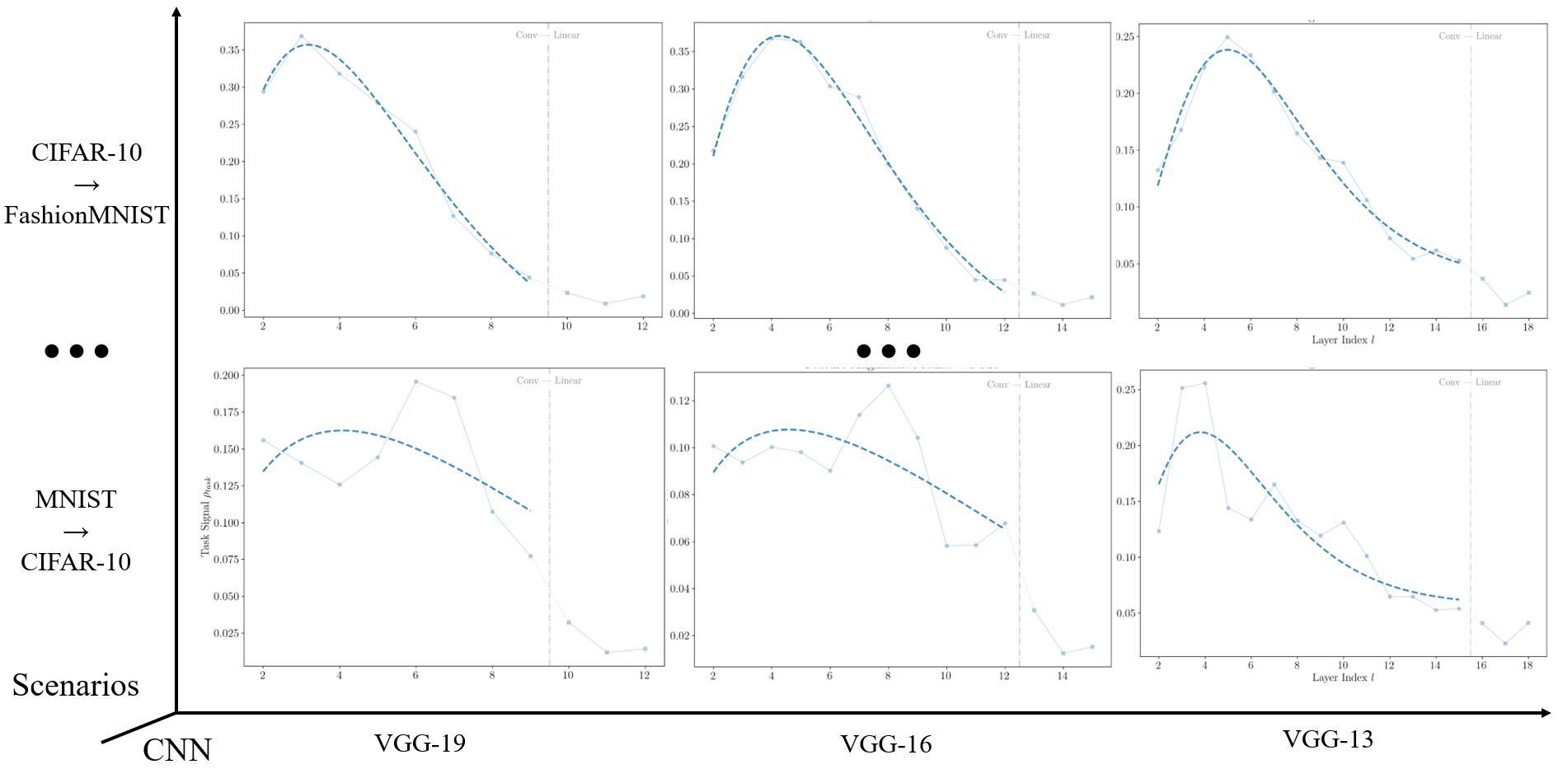}
        \caption{Validation on convolutional neural network (CNN).}
        \label{fig:middle_layer_vulnerability_cnn}
    \end{subfigure}
    
    \caption{Empirical Validation of the Middle-Layer Vulnerability across several scenarios: calibration efficacy (left), filtering trade-offs via precision/recall curves (middle), and PR curve with AUC-PR (right). Each row corresponds to a different continual learning scenario.}
    \label{fig:more_support_for_middle_layer_vulnerability}
\end{figure}

\subsection{Strong correlation between ADS and logit shift in Transformer}
We validate the scope of ADS by comparing ADS and theory proposed by~\citeauthor{guha2024diminishing} in Vision Transformer (ViT). Considering the huge quantity requirement of ViT, we choose CIFAR-100 and ImageNet as our benchmarks, and we resize ImageNet's data to $32\times 32$ to align with CIFAR-100.
\begin{figure}[h]
\centering
\includegraphics[width=\linewidth]{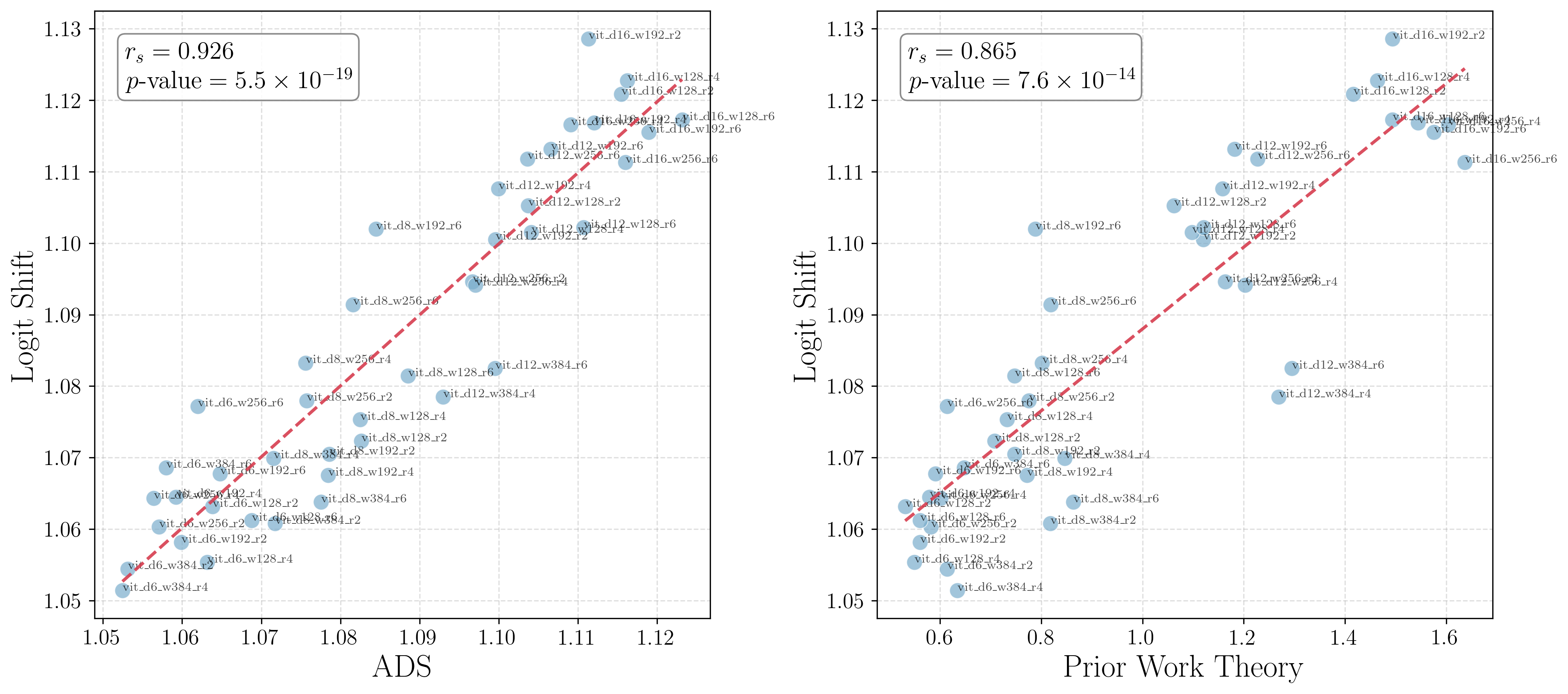}
\caption{Correlation comparison between empirically observed logit shift and theoretical metrics across heterogeneous model architectures on CIFAR-10 to ImageNet scenario. The scatter plots illustrate the relationship between the logit shift and two theoretical predictors evaluated on various ViT configurations. The dashed red line represents the trend line.}
\label{app_fig:vit_theory_logit_shift}
\end{figure}
As shown in Figure~\ref{app_fig:vit_theory_logit_shift}, the proposed ADS metric exhibits a stronger Spearman rank correlation ($r_s = 0.926$) with the logit shift compared to the theory from prior work ($r_s = 0.865$).
\subsection{Validation of Expected Calibration Error}
\label{app_sec:application}
\subsubsection{Experimental Setup}
We choose 30\% proportion of dataset to calibrate parameter. There are several scenarios to validate the performance of ADS-based selector as shown Figure~\ref{fig:application_ece_full}.

\subsection{Experimental Results}
The empirical validation of the ADS-based selector's performance in reliable continual learning model selection is shown in Figure~\ref{fig:application_ece_full}. Although the selected architectures do not achieve the ideal lowest Expected Calibration Error (ECE), they are more reliable than both the vanilla architecture and those calibrated with a post-hoc temperature scaling mechanism. As illustrated in the middle column of Figure~\ref{fig:application_ece_full}, the ADS-based selector serves as an effective coarse-grained selector. It achieves high precision at low thresholds and high recall at high thresholds, indicating that it can identify the best-performing models or filter out the worst-performing ones with high probability. Furthermore, the high AUC-PR values across various scenarios demonstrate that the ADS-based selector consistently outperforms the random baseline, which typically yields an AUC-PR of $0.51$.

\begin{figure}[htbp]
    \centering

    % (a) MNIST
    \begin{subfigure}{\linewidth}
        \centering
        \includegraphics[width=0.8\linewidth]{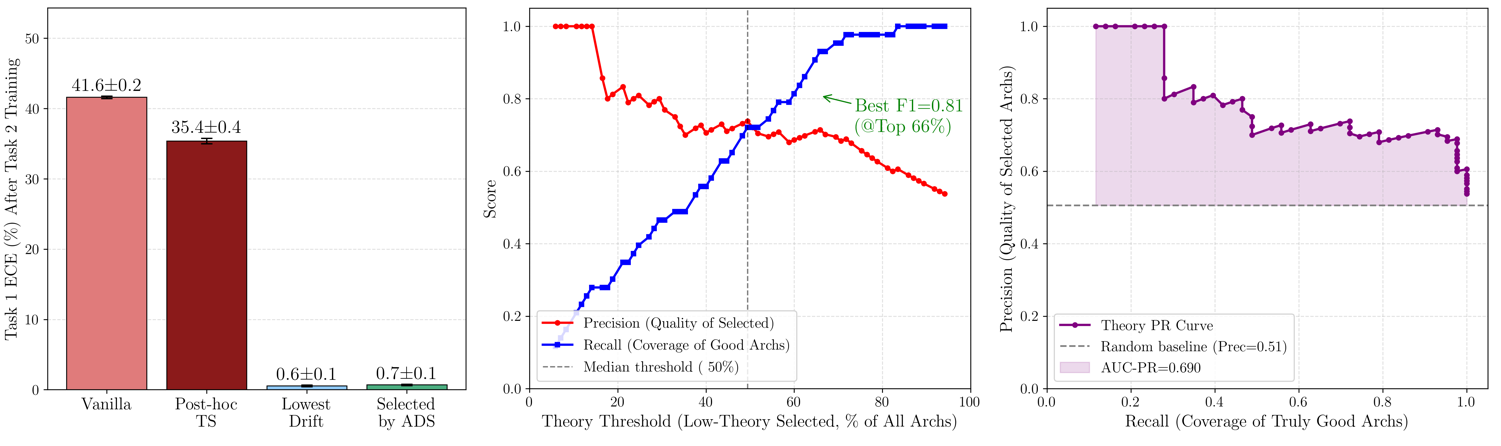}
        \caption{MNIST (classes 5-9 $\to$ classes 0-4)}
        \label{fig:ece_mnist_inv}
    \end{subfigure}

    % \vspace{0.5em}

    % (b) FashionMNIST 0-4 -> 5-9
    \begin{subfigure}{\linewidth}
        \centering
        \includegraphics[width=0.8\linewidth]{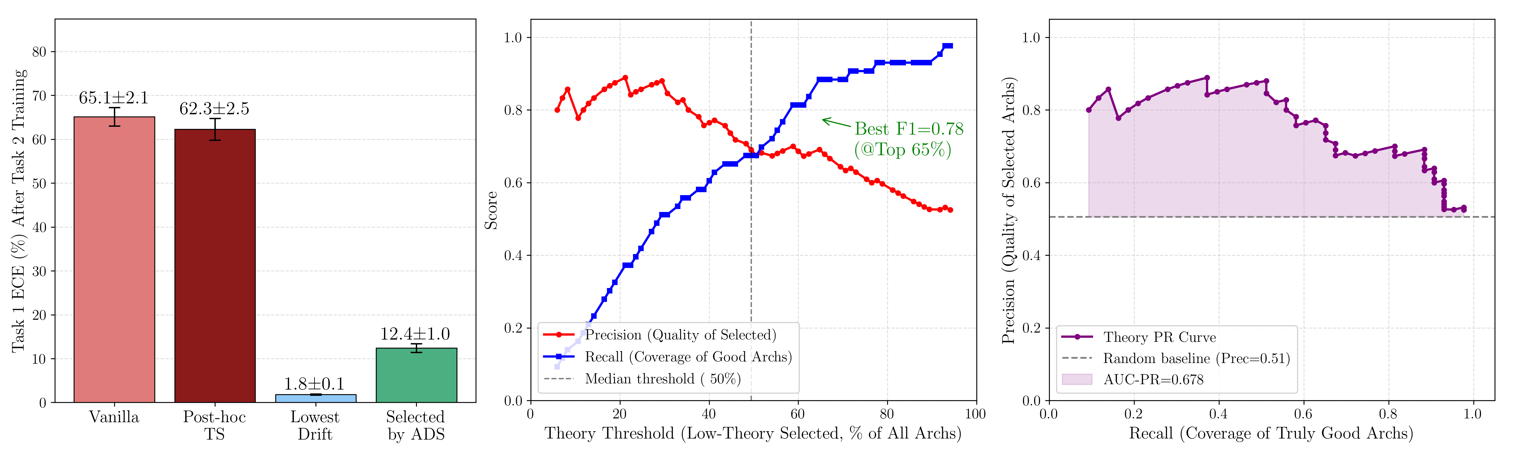}
        \caption{FashionMNIST (classes 0-4 $\to$ classes 5-9)}
        \label{fig:ece_fmnist}
    \end{subfigure}

    % \vspace{0.5em}

    % (c) FashionMNIST 5-9->0-4
    \begin{subfigure}{\linewidth}
        \centering
        \includegraphics[width=0.8\linewidth]{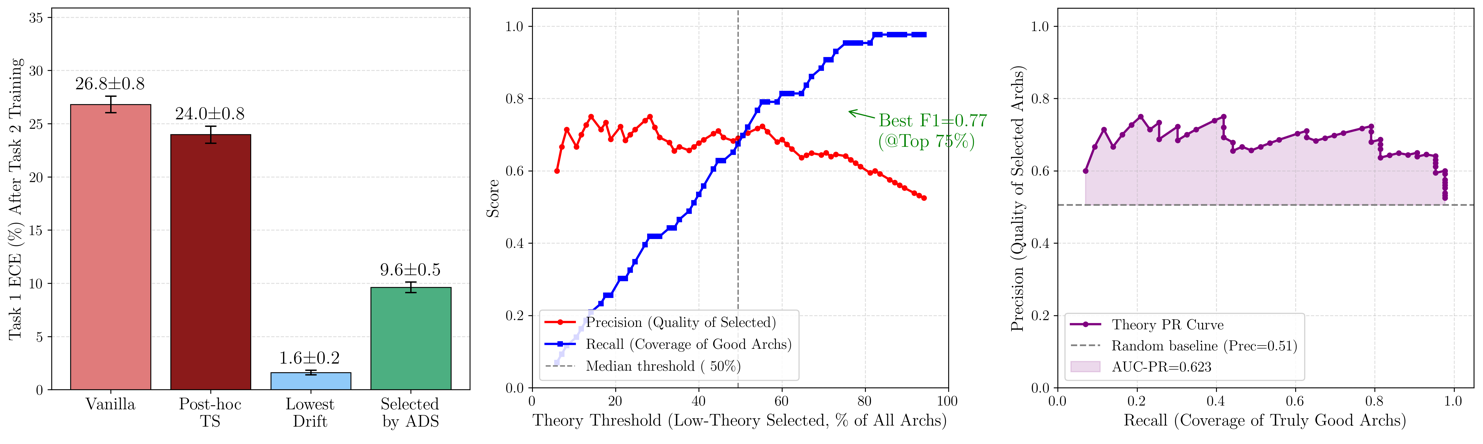}
        \caption{FashionMNIST (classes 5-9 $\to$ classes 0-4)}
        \label{fig:ece_fmnist_inv_cifar}
    \end{subfigure}

    % \vspace{0.5em}
    
    % (d) CIFAR-10 0-4 -> 5-9
    \begin{subfigure}{\linewidth}
        \centering
        \includegraphics[width=0.8\linewidth]{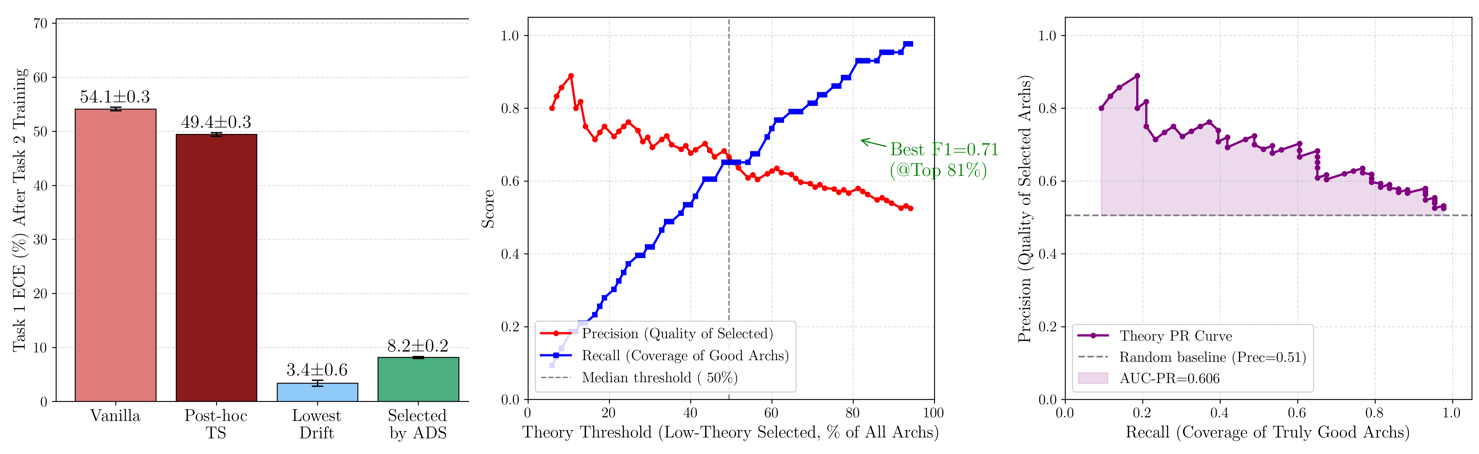}
        \caption{CIFAR-10 (classes 5-9 $\to$ classes 0-4)}
        \label{fig:ece_cifar}
    \end{subfigure}

    % \vspace{0.5em}
    
    % (e) CIFAR-10 5-9->0-4
    \begin{subfigure}{\linewidth}
        \centering
        \includegraphics[width=0.8\linewidth]{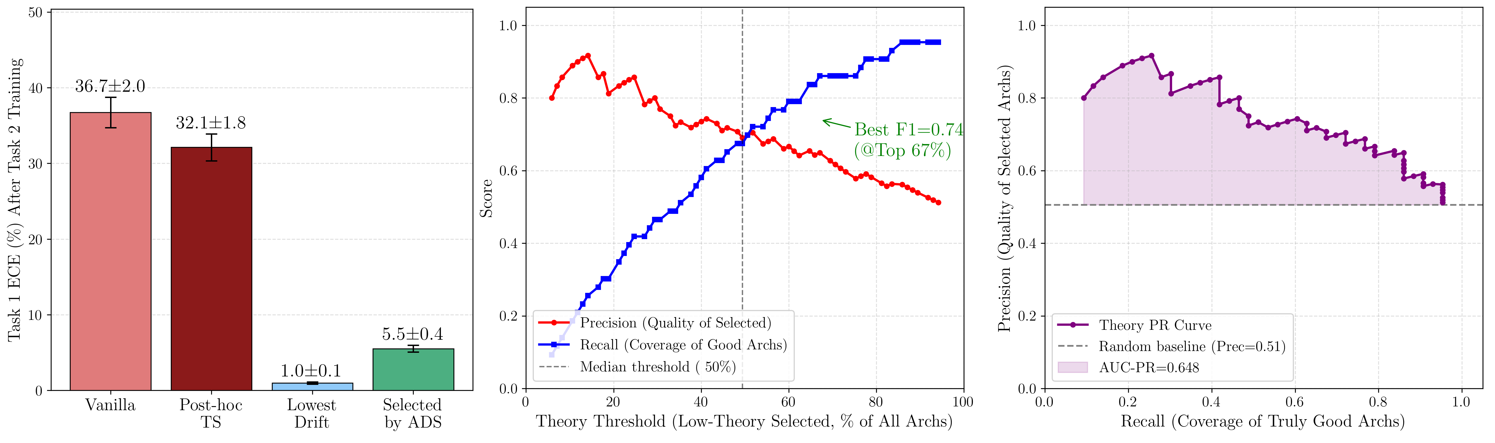}
        \caption{CIFAR-10 (classes 5-9 $\to$ classes 0-4)}
        \label{fig:ece_cifar_inv}
    \end{subfigure}

    \caption{Quantitative evaluation of the ADS-based selector across all benchmarks:
             calibration efficacy (left), filtering trade-offs via precision/recall curves (middle),
             and PR curve with AUC-PR (right). Each row corresponds to a different
             continual learning scenario.}
    \label{fig:application_ece_full}
\end{figure}

\end{document}

%% file: math_commands.tex
\usepackage{amsmath,amsfonts,bm}

\def\eqref#1{equation~\ref{#1}}
\def\1{\bm{1}}

\DeclareMathAlphabet{\mathsfit}{\encodingdefault}{\sfdefault}{m}{sl}
\SetMathAlphabet{\mathsfit}{bold}{\encodingdefault}{\sfdefault}{bx}{n}